\documentclass{article} 
\usepackage{iclr2026_conference,times}

\usepackage{amsmath,amsfonts,bm}

\def\eqref#1{equation~\ref{#1}}

\def\1{\bm{1}}

\DeclareMathAlphabet{\mathsfit}{\encodingdefault}{\sfdefault}{m}{sl}
\SetMathAlphabet{\mathsfit}{bold}{\encodingdefault}{\sfdefault}{bx}{n}

\usepackage{hyperref}
\usepackage{url}

\usepackage{algorithm}
\usepackage{algorithmic}

\usepackage{graphicx}
\usepackage{booktabs}
\usepackage{multirow}
\usepackage[table]{xcolor}
\usepackage{enumitem}
\usepackage{amsmath,amssymb,amsthm}
\usepackage{natbib}
\usepackage{algorithm}
\usepackage{algorithmic}
\newtheorem{theorem}{Theorem}

\theoremstyle{definition}

\theoremstyle{remark}

\usepackage{soul}
\usepackage{xcolor}

\usepackage{xspace}
\newcommand{\shortname}{TESS\xspace}
\newcommand{\longname}{Transferable Example Scoring and Selection\xspace}

\usepackage{titletoc}
\titlecontents{section}
[1.5pc]                              
{\addvspace{6pt}\bfseries}           
{\contentslabel{1.5pc}}           
{}                                  
{\titlerule*[0.5pc]{.}\contentspage}
\usepackage{wrapfig}

\title{Scalable, Transferable Meta-network for Data Selection Requires a Different Loss (and Why the Obvious Choice is Problematic)}

\author{Zilin Du\\
College of Computing and Data Science \\
Nanyang Technological University \\
Singapore \\
\texttt{zilin003@e.ntu.edu.sg} \\
\And
Bowen Yang \\
College of Computing and Data Science \\
Nanyang Technological University \\
Singapore \\
\texttt{yang1117@e.ntu.edu.sg} \\
\And
Boyang Albert Li  \thanks{ Corresponding author.} \\
College of Computing and Data Science \\
Nanyang Technological University \\
Singapore \\
\texttt{boyang.li@ntu.edu.sg}
}

\iclrfinalcopy 
\begin{document}

\maketitle

\begin{abstract}

Data selection is critical for training large language models on
massive and heterogeneous corpora. Meta-learning for Training-data Selection offers a principled alternative to heuristic scoring by learning data
weights from a target validation objective, but existing methods face a
trade-off between fine-grained valuation and transferability to unseen data.
A natural solution is to replace per-sample weights with a selection network.
However, we find that directly incorporating such a network into existing MTS
objectives leads to unstable optimization and poor generalization, caused by
weight suppression and persistent reliance on easy-to-learn features. To
address these issues, we propose \longname (\shortname), a scalable data-selection
framework built on a Pointwise Value Matching objective (PVM). Experiments on LLM safety and targeted instruction tuning demonstrate
strong transfer across datasets, from subsets to full corpora, and from
smaller to larger models.

\end{abstract}

\vspace{-0.5em}
\section{Introduction}
\vspace{-0.5em}
The performance of modern AI systems is fundamentally shaped by the training data \citep{zha2025data,kumar2024opportunities,zha2023data,whang2023data}. For large language models (LLMs), scaling requires assembling massive
corpora from heterogeneous sources \citep{zhou2025survey,weber2024redpajama,langlais2026common,li2024datacomp}, which often introduces redundant and
low-quality examples. Training data selection is hence an important component of LLM training.
Many existing methods rank training examples by manually designed scores, such as entropy \citep{li2026unified}, instruction-following difficulty \citep{li2024quantity}, reducible training loss \citep{mindermann2022prioritized}, and distance to target domain \citep{deng2026less, min2026gist}, to name a few. These scoring rules lack the ability to dynamically adapt to new scenarios. Though each captures a potentially useful signal, it remains an open question how these scores may be combined to create more powerful ranking mechanisms.

Meta-learning for Training-data Selection (MTS) \citep{shen2025seal, pan2025scalebio, ren2018learning, wang2020meta, du2026difficulty, shu2019meta, yu2025llm,calian2026datarater}
provides a principled alternative, where the values of training data are learned directly from their effects on the validation
loss $\mathcal{L}_{\mathrm{val}}$. Let $\bm{\theta}$ denote the LLM parameters, $w_i > 0$ be the learnable weight for the loss of the $i$-th training data point, $\ell_i(\bm{\theta})$, and the vector $\bm w = (w_i)_i$ contain all $w_i$ as components.
MTS formulates data weighting as the following
bilevel optimization problem:
\begin{equation}
\label{eq:bi-level-problem}
    \mathbf{w}^*
    =
    \arg\min_{\mathbf{w}}
    \mathcal{L}_{\mathrm{val}}
    \bigl(\bm\theta^*(\mathbf{w})\bigr), \qquad
    \text{s. t.}\quad
    \bm\theta^*(\mathbf{w})
    =
    \arg\min_{\bm\theta}
    \|\bm{w}\|_1^{-1} \sum_i w_i \ell_i(\bm{\theta}).
\end{equation}
This framework allows automatic identification of effective training data, which lead the model to generalize to the validation data.

Nevertheless, MTS in practice runs into two obstacles; the first is scalability. Computing the exact hypergradient
$\nabla_{\bm w}\mathcal{L}_{\mathrm{val}}(\bm\theta^*(\mathbf{w}))$
by unrolling the inner optimization is costly.
Even unrolling one step and taking the finite-different approximation require three backward operations on $\bm{\theta}$ for one gradient step on $\bm{w}$ \citep{du2026difficulty,shu2019meta}, which is prohibitive expensive for large LLMs.  \citet{calian2026datarater} manage to unroll 8 steps using 64 v5 TPUs, which is beyond the capability of most academic labs. 
With penalty-based reformulations~\citep{shen2023penalty, xiao2023generalized, shen2025principled}, ScaleBiO~\citep{pan2025scalebio} reduces the number of backward calls to two, whereas SEAL \citep{shen2025seal} reduces it further to one, but incurs discernible performance losses. 

The second challenge is generalization to unseen data. In the formulation of \citet{ren2018learning,shen2025seal}, every data point receives a separate weight $w_i$, and no generalization is possible. An alternative approach~\citep{pan2025scalebio} is to assign a separate weight for each data domain (e.g., one dataset in a collection of multiple datasets), which provides generalization within each domain. However, this overlooks heterogeneity among data points in a single domain and cannot generalize to new data domains. 

One intuitively appealing idea is to train a meta-network that outputs $w_i$ for every data point. If the meta-network can  generalize to unseen data points, we would be able to train on a small dataset and perform data selection on a much larger dataset, which would also substantially improve efficiency and scalability.  Though the meta-network approach has been used by \citet{shu2019meta,du2026difficulty,calian2026datarater}, no such generalization has been reported, not even for the extraordinarily large-scale experiments of \citet{calian2026datarater}. Perhaps unsurprisingly, our experiments find simplistic training of the meta-network leads to unstable optimization (Figure \ref{fig:weight_dynamics}) and poor generalization (Table \ref{tab:shortcut_generalization} and \ref{tab:ablation_selector_loss}).



Given the massive success of deep learning, the failure of the meta-network to generalize to unseen data is especially perplexing. Our analysis traces the 
failure to two properties of the training loss. First, as training progresses, most predicted data weights are suppressed toward zero. As a result, over time, the meta-network loses the ability to distinguish useful training data from harmful training data. Second, the loss causes the network to exclusively learn easy features and ignores hard feature, so the network becomes susceptible to shortcut learning.


To improve the scalability and generalization of MTS, we propose \longname (\shortname). At its core, \shortname simplifies  ScaleBiO \citep{pan2025scalebio} and introduces a simple mean squared error objective, which we call Pointwise Value Matching (PVM). \shortname{} constructs sample-level pseudo-labels that capture
the utility of training data points and uses these values as supervision for the data-selecting meta-network. Compared to the existing loss, the PVM loss is less focused on easy features, which curbs shortcut learning and promotes generalization. 
Further, \shortname{} removes the requirement of ScaleBiO to simultaneously train two LLMs with the meta-network, cutting peak memory footprint by more than 20\%.  
Experiments on LLM safety and targeted instruction tuning demonstrate substantially improved generalization of \shortname{} across three
settings: cross-dataset transfer,
small-to-large model transfer, and subset-to-full-corpus transfer. Our contributions can be summarized as:
\begin{itemize}[leftmargin=*]
\item We identify and theoretically analyze two limitations of applying the existing MTS training objective to a shared selection network: competitive weight
suppression that undermines the identification of high-quality training data, and persistent
preference for easy-to-learn features that can encourage shortcut learning.
\item We propose \shortname, a transferable data-selection
framework that constructs sample-level pseudo-labels and uses them as supervision to learn a data selection network generalizable to
unseen data. To the best of our knowledge, this is the first work that demonstrates generalization to unseen data in MTS. 
\end{itemize}

\vspace{-0.5em}
\section{Related Work}
\vspace{-0.5em}
\textbf{Meta-Learning Data Selection for LLMs.}
Scaling meta-learning data selection to LLMs requires efficiency in face of both huge parameter counts~\citep{pan2025scalebio} and dataset sizes~\citep{xie2023data}. Traditional data selection methods~\citep{ren2018learning,shu2019meta} differentiate through model updates to compute the hypergradient, which incurs large time and space complexity for large models. ScaleBiO \citep{pan2025scalebio} and SEAL~\citep{shen2025seal} focus on scaling to model parameter counts. If data selection can be trained on a small dataset and generalize to unseen data, it would also significantly improve scalability to large datasets. However, such generalization remains a difficult challenge. 

In terms of generalization, existing approaches can be broadly grouped into sample-level, domain-level, and meta-network methods. Example-level methods
~\citep{xie2023data,xia2024less,shen2025seal,calian2026datarater} 
learn one weight for each training data point, thereby offering no generalization to new data points. Domain-level methods
~\citep{xie2023doremi, xie2025chameleon,pan2025scalebio,yang2026data,wen2026mixatlas}
partition the training data into domains and assign one data weight to each domain. Chameleon~\citep{xie2025chameleon} and Data Mixing Agent~\citep{yang2026data} further introduce a
shared domain taxonomy that enables limited weight sharing among data domains. However, domain-level weights are too coarse to capture the
crucial differences in data quality and content within each domain \citep{lee2024conceptskill}. 
Meta-network methods \citep{shu2019meta,du2026difficulty,calian2026datarater} train a meta-network to output the weight for each data point directly. In theory, this should allow generalization to new data points; in practice, no generalization has been reported. Notably, DataRater \citep{calian2026datarater} trains a separate scorer for each dataset,
suggesting generalization across dataset is challenging even when the network are trained with a huge amount of data.
With this paper, we analyze the reasons behind the challenge mathematically and offer a simple, practical, and high-performance solution.

\textbf{Shortcut Learning.} Shortcut learning occurs when models rely on easy features that are
strongly predictive of training labels but whose correlations
weaken or reverse under distribution shifts, impairing
out-of-distribution generalization~\citep{geirhos2020shortcut, shah2020pitfalls}. Prior work distinguishes \emph{predictivity}, how reliably a
feature predicts training labels, from \emph{availability},
how easily it can be extracted~\citep{hermann2024foundations}.
\citet{scimeca2021shortcut} find that solutions favoring easy-to-learn
cues tend to lie in flatter minima and are more abundant in
parameter space, while \citet{qiu2024complexity} show that simpler
or more strongly correlated spurious features can slow
core-feature learning. We investigate these dynamics in data selection networks. We show that the ScaleBiO loss continues to reinforce easy-to-learn features that are susceptible to shortcut learning, while our proposed PVM promotes the learning of harder and more transferable features.

\vspace{-0.5em}
\section{Methodology}
\label{sec:method}
\vspace{-0.5em}
\subsection{Problem Formulation}
\label{sec:problem}

Let $\mathcal D_{\mathrm{train}}$ and $\mathcal D_{\mathrm{val}}$ denote the
training and target validation sets, with sizes $N_{\mathrm{train}}$ and
$N_{\mathrm{val}}$, respectively. We further use $\mathcal D_{\mathrm{pool}}$ to denote the candidate pool to
be scored. For example, $\mathcal D_{\mathrm{pool}}$ can be a larger corpus
containing $\mathcal D_{\mathrm{train}}$,
$\mathcal D_{\mathrm{train}}\subsetneq\mathcal D_{\mathrm{pool}}$,
or a disjoint dataset,
$\mathcal D_{\mathrm{train}}\cap\mathcal D_{\mathrm{pool}}=\varnothing$. Let $\ell_i(\bm\theta)$ denote the
token-averaged cross-entropy loss for the $i$-th data point under an LLM parameterized by $\bm\theta$. For the training examples, let $w_i>0$ denote the weight of
the $i$-th data point, $\bm w=\smash{(w_i)_{i=1}^{N_{\mathrm{train}}}}$
the weight vector, and
$S=\|\bm w\|_1$ the total weight.

We aim to learn a data selection meta-network $s_{\bm\phi}$ that
predicts sample weights and can score unseen examples in
$\mathcal D_{\mathrm{pool}}$.
We parameterize $w_i=s_{\bm\phi}(x_i)>0$ and denote the resulting
weight vector by $\bm w(\bm{\phi})$.
In MTS, learning $\bm\phi$ is formulated as bilevel optimization:
\begin{align}
    \bm{\phi}^{*}
    &=
    \arg\min_{\bm\phi}
    \mathcal{L}\bigl(
        \bm\theta^{*};
        \mathcal{D}_{\mathrm{val}}
    \bigr)
    =
    \arg\min_{\bm\phi}
    {N_{\mathrm{val}}^{-1}}
    \sum\nolimits_{i=1}^{N_{\mathrm{val}}}
    \ell_i(\bm\theta^{*}),
    \label{eq:mts_outer}
    \\
    \text{s.t.}\quad
    \bm\theta^{*}
    &=
    \arg\min_{\bm\theta}
    \mathcal{L}\bigl(
        \bm\theta,\bm w(\bm{\phi});
        \mathcal{D}_{\mathrm{train}}
    \bigr)
    =
    \arg\min_{\bm\theta}
    \|\bm w\|_1^{-1}
    \sum\nolimits_{i=1}^{N_{\mathrm{train}}}
    w_i \, \ell_i(\bm\theta).
    \label{eq:mts_inner}
\end{align} Here,
$\mathcal{L}(\bm\theta;\mathcal{D})$ denotes the average
per-sample loss, while
$\mathcal{L}(\bm\theta,\bm w(\bm{\phi});\mathcal{D})$ denotes the loss weighted by the meta-network $\bm\phi$. Under this formulation, we optimize $\bm\phi$ such that training the LLM with its predicted weights minimizes the loss on $\mathcal{D}_{\mathrm{val}}$.

\subsection{Preliminaries on ScaleBiO}
\label{sec:scalebio}

ScaleBiO~\citep{pan2025scalebio} provides an efficient framework
for MTS at the LLM scale, optimizing data weights $\bm w$ for each data point. Let $\bm\theta_W$ denote the LLM parameters and $\bm\theta_U$ denote the parameters of an auxiliary LLM. Optimizing $\bm\theta_U$ on the training set yields $\bm\theta_U^{*} = \arg\min_{\bm\theta_U}
\mathcal{L}(\bm\theta_U,\bm w;\mathcal D_{\mathrm{train}})$. The original constraint in the bi-level problem (Eq. \ref{eq:bi-level-problem}) can then rewritten as $\bm\theta_W = \bm\theta_U^{*}$.

Relaxing the constraint to a necessary condition that $\bm\theta_W$ and $\bm\theta_U^{*}$ have the same training loss, we obtain a new constraint: $\mathcal L(\bm\theta_W,\bm w;\mathcal D_{\mathrm{train}})
    - \smash{\min_{\bm\theta_U}}
    \mathcal L
    (\bm\theta_U,\bm w;\mathcal D_{\mathrm{train}})
    =0.$
Building the Lagrangian function with the penalty coefficient $\alpha>0$, we derive a new minimax problem:
\begin{equation}
    \smash{\min_{\bm w,\bm\theta_W}
    \max_{\bm\theta_U}}\quad
    \mathcal L
    (\bm\theta_W;\mathcal D_{\mathrm{val}})
    +
    \alpha
    \Big[
        \mathcal L
        (\bm\theta_W,\bm w;\mathcal D_{\mathrm{train}})
        -
        \mathcal L
        (\bm\theta_U,\bm w;\mathcal D_{\mathrm{train}})
    \Big].
    \label{eq:scalebio_minimax}
\end{equation}
To solve Eq.\ref{eq:scalebio_minimax}, ScaleBiO simultaneously trains the two LLMs and data weights, resulting in substantial memory overhead.
SEAL~\citep{shen2025seal} additionally eliminates the auxiliary LLM $\bm\theta_U$, but this comes at an accuracy penalty. 

We write the ScaleBiO loss for data weights $\bm{w}$ explicitly:
\begin{equation}
    \smash{\mathcal L_w^{\mathrm{SBO}}
    (\bm w;\bm\theta_W,\bm\theta_U)}
    =
    \smash{ \|\bm w\|_1^{-1}
    \sum\nolimits_{i=1}^{N_{\mathrm{train}}}}
    w_i
    \big[
        \ell_i(\bm\theta_W)-\ell_i(\bm\theta_U)
    \big].
    \label{eq:sbo_loss_w}
\end{equation}
It is evident that, to optimize $L_w^{\mathrm{SBO}}$, we should assign high $w_i$ to data points with a low loss on $\bm\theta_w$ and a high loss on $\bm\theta_U$. Note that $\bm\theta_w$ is trained on both the training set $D_{\mathrm{train}}$ and the validation set $D_{\mathrm{val}}$ but $\bm\theta_U$ is trained on $D_{\mathrm{train}}$ only. Therefore, ScaleBiO selects data points that benefit from additional training on $D_{\mathrm{val}}$, which are likely more similar to $D_{\mathrm{val}}$ than $D_{\mathrm{train}}$.

\subsection{\longname (\shortname)}
\label{sec:TESS}

\begin{algorithm}[t]
\caption{Overview of the \shortname Framework}
\label{alg:ours}
\begin{algorithmic}[1]

\REQUIRE Training set $\mathcal{D}_{\mathrm{train}}$,
validation set $\mathcal{D}_{\mathrm{val}}$,
candidate pool $\mathcal{D}_{\mathrm{pool}}$,
coefficient $\alpha$

\STATE Obtain $\bm\theta_U^{*}$ by minimizing
$\mathcal{L}(\bm\theta_U;\mathcal{D}_{\mathrm{train}})$

\STATE Obtain $\bm\theta_W^{*}$ by minimizing
$\mathcal{L}(\bm\theta_W;\mathcal{D}_{\mathrm{val}})
+\alpha\mathcal{L}(\bm\theta_W;\mathcal{D}_{\mathrm{train}})$

\STATE For each $x_i\in\mathcal{D}_{\mathrm{train}}$, compute the pseudo-label
$\Delta_i(\bm\theta_U^{*},\bm\theta_W^{*})
\gets
\ell(\bm\theta_U^{*};x_i)
-
\ell(\bm\theta_W^{*};x_i)$

\STATE Train the selection network by optimizing
$\bm{\phi}^{*}
\gets
\arg\min_{\bm\phi}
\smash{\mathcal{L}_{\bm\phi}^{\mathrm{PVM}}}
(\bm\phi;\bm\theta_W^{*},\bm\theta_U^{*})$ (Eq.\ref{eq:sbo_loss})

\STATE For each $x_j\in\mathcal{D}_{\mathrm{train}}$, predict $\hat{w}_j\gets s_{\bm{\phi}^{*}}(x_j)$

\RETURN Predicted data values
$\hat{\mathbf{w}}_{\mathrm{pool}}
=
\{\hat{w}_j\}_{x_j\in\mathcal{D}_{\mathrm{pool}}}$

\end{algorithmic}
\end{algorithm}

\textbf{Incorporating a Selection Network.}
We extend ScaleBiO by replacing the independent data weights $w_i$ with the data selection meta-network $s_{\bm\phi}(x_i)$. This yields the ScaleBiO meta-network objective (SBO):
\begin{equation}
    \smash{\mathcal L_{\bm\phi}^{\mathrm{SBO}}
    (\bm\phi;\bm\theta_W,\bm\theta_U)}
    =
    \smash{\left(\sum\nolimits_{i=1}^{N_{\mathrm{train}}}
    s_{\bm\phi}(x_i)\right)^{-1}}
    \smash{\sum\nolimits_{i=1}^{N_{\mathrm{train}}}
    s_{\bm\phi}(x_i)}
    \big[
        \ell_i(\bm\theta_W)-\ell_i(\bm\theta_U)
    \big].
    \label{eq:sbo_loss}
\end{equation}


However, empirically optimizing the meta-network with $\smash{\mathcal{L}_{\bm\phi}^{\mathrm{SBO}}}$ leads to unstable optimization
(Figure~\ref{fig:weight_dynamics}) and poor generalization
(Tables~\ref{tab:shortcut_generalization} and
\ref{tab:ablation_selector_loss}). We find this can be simply addressed by Pointwise Value Matching, presented below. 


\textbf{Pointwise Value Matching.} The key idea is to use the loss difference between the two LLMs as a proxy for sample
utility. Recall that $\bm\theta_U$ is trained only on
$\mathcal D_{\mathrm{train}}$, whereas $\bm\theta_W$ is additionally guided by
$\mathcal D_{\mathrm{val}}$. For each training example $x_i$, we define
$\Delta_i(\bm\theta_U,\bm\theta_W):=\ell_i(\bm\theta_U)-\ell_i(\bm\theta_W)$.
A larger $\Delta_i$ indicates a greater loss reduction under the
validation-guided model relative to the training-only model, suggesting
greater relevance to the target validation objective. We use this difference
as a pseudo-label and train the meta-network to match it:
\begin{equation}
\smash{\mathcal{L}_{\bm\phi}^{\mathrm{PVM}}(\bm\phi;\bm\theta_W,\bm\theta_U)
=
\smash{{N_{\mathrm{train}}}^{-1}}
\sum\nolimits_{i=1}^{N_{\mathrm{train}}}
\left(
s_{\bm\phi}(x_i)
-
\Delta_i(\bm\theta_U,\bm\theta_W)
\right)^2}.\end{equation}

\textbf{Sequential Training of \shortname.}
Unlike ScaleBiO, \shortname separates the training of the three models. Specifically, using uniform sample weights, we obtain $\bm\theta_U^*$ by minimizing
$\mathcal L(\bm\theta_U;\mathcal D_{\mathrm{train}})$ and $\bm\theta_W^*$ by
minimizing
$\mathcal L(\bm\theta_W;\mathcal D_{\mathrm{val}})
+\alpha\mathcal L(\bm\theta_W;\mathcal D_{\mathrm{train}})$.
We then freeze both models, compute
$\Delta_i(\bm\theta_U^*,\bm\theta_W^*)$ for each training example, and train the
meta-network by minimizing
$\smash{\mathcal L_{\bm\phi}^{\mathrm{PVM}}
(\bm\phi;\bm\theta_W^*,\bm\theta_U^*)}$. This sequential design has two advantages over ScaleBiO. First, it avoids simultaneously optimizing two LLMs and the
selection network, reducing peak memory consumption. Second, it allows the
pretrained LLMs and the randomly initialized meta-network to use independent
optimization schedules. The trained meta-network
directly scores unseen examples in $\mathcal D_{\mathrm{pool}}$ without
further optimization. The complete procedure of \shortname is
summarized in Algorithm~\ref{alg:ours}.

\vspace{-0.5em}
\section{Why the ScaleBiO Loss Performs Poorly}
\label{sec:selector_objective}
\vspace{-0.5em}
We next examine why optimizing with the ScaleBiO objective can fail to produce a transferable meta-network. We identify two
properties of the ScaleBiO objective: competitive weight suppression,
which weakens sample discrimination, and a persistent preference for
easy-to-learn features, which can encourage shortcut learning.

\subsection{Data Weight Suppression and Loss of Discrimination}
\label{sec:weight_concentration}

Our first observation is that the ScaleBiO objective drives most predicted
weights toward zero, alongside losing the power of identifying good data (Figure~\ref{fig:weight_dynamics}). This occurs under both joint training in ScaleBiO and sequential training in \shortname when training with $\smash{\mathcal L_{\bm\phi}^{\mathrm{SBO}}}$. Our analysis below strictly applies to sequential training in \shortname, where the
two LLMs are fixed. Empirically, we observe qualitatively similar dynamics
under the joint training in ScaleBiO.

We first treat the weights of training examples as independent
hyperparameters. For the selection network, $w_i=s_{\bm\phi}(x_i)$ is updated through shared parameters $\bm\phi$. However, the update directions on $w_i$ with and without $\bm\phi$ are similar. With $\smash{\Delta_i^*:=\Delta_i(\bm\theta_U^*,\bm\theta_W^*)}$ and $N=N_{\mathrm{train}}$,
the ScaleBiO objective and its gradient with respect to each data weight are
\begin{equation}
    \mathcal L_{\bm\phi}^{\mathrm{SBO}}
    =-\sum\nolimits_{i=1}^{N}p_i\Delta_i^*,
    \qquad
    \smash{\frac{\partial\mathcal L_{\bm\phi}^{\mathrm{SBO}}}{\partial w_i}}
    =-\frac{\Delta_i^*-\bar\Delta^*}{S},
    \label{eq:sb_weight_gradient}
\end{equation}
where $\smash{S=\sum_{j=1}^{N}w_j}$ is the total weight,
$p_i=w_i/S$ the normalized weight, and
$\smash{\bar\Delta^*=\sum_i p_i\Delta_i^*}$ the weighted-average pseudo-label.
The derivation is provided in Appendix~\ref{app:sbo_gradient}.

The gradient reveals a competitive mechanism:
examples are up-weighted when $\Delta_i^*>\bar\Delta^*(t)$ and down-weighted
otherwise. As examples with larger pseudo-labels gain relative weight, the
weighted mean rises, making it harder for other examples to receive positive
updates. This can produce a \emph{winner-takes-most} allocation, in which a single weight dominates the total mass. Given a lower bound $\delta_0\geq0$ of data weight, the following theorem is given by \citet{du2026difficulty} (proof reproduced in Appendix \ref{app:weight_concentration_proof}).



\begin{theorem}[Concentration of normalized data weights $p$]
\label{thm:weight_concentration}
Assume there is a unique index $K$
such that $\Delta_K^*>\Delta_i^*$ for all $i\neq K$, then $\lim_{t\to\infty}p_K(t)=1$ and $\lim_{t\to\infty}p_i(t)=0$. For $\delta_0=0$, this limit is reached in finite time.
For $\delta_0>0$, $1-p_K(t)=O(t^{-1/3})$.
\end{theorem}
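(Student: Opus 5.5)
We treat the weights as following the gradient flow $\dot w_i(t) = -\partial\mathcal L^{\mathrm{SBO}}_{\bm\phi}/\partial w_i = (\Delta_i^*-\bar\Delta^*(t))/S(t)$, with the lower bound enforced by projection: once $w_i$ reaches $\delta_0$ with negative velocity, it stays at $\delta_0$. This follows Eq.~\ref{eq:sb_weight_gradient}. The plan is to first establish two structural facts. First, $\sum_i p_i(\Delta_i^*-\bar\Delta^*)=0$, so $\dot S=\sum_i \dot w_i$ reduces to a sum over clamped coordinates. In particular, $S$ is conserved while no coordinate is clamped. Second, the ratio dynamics favor $K$: since $\Delta_K^*\ge\bar\Delta^*$ always, with equality only when $p_K=1$, we have $\dot w_K\ge 0$. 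Hence $w_K$ is nondecreasing. Moreover, $\frac{d}{dt}(w_i/w_K)\le 0$ whenever $\Delta_i^*<\Delta_K^*$ and $w_i$ is not clamped.

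\textbf{Step 1: monotonicity of $p_K$ and a Lyapunov function.} We show $p_K$ is nondecreasing and $\bar\Delta^*(t)$ is nondecreasing along the flow. Differentiating $\bar\Delta^*=\sum_i w_i\Delta_i^*/S$ gives
\[
\dot{\bar\Delta}^*=S^{-2}\textstyle\sum_{i\,\mathrm{free}}(\Delta_i^*-\bar\Delta^*)^2\ge 0.
\]
This is the variance of $\Delta^*$ over the free coordinates. Boundedness of $\bar\Delta^*$ by $\Delta_K^*$ then forces the integrated variance to be finite. Let $g:=\Delta_K^*-\max_{i\ne K}\Delta_i^*>0$ denote the gap. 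Any non-$K$ coordinate that keeps positive mass keeps $\dot w_i\le -g\,p_K/S+\ldots$, and we use this to drive it down.

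\textbf{Step 2: case $\delta_0=0$, finite time.} Let $\Delta_{(2)}:=\max_{i\ne K}\Delta_i^*$. For $i\ne K$ we have $\dot w_i=(\Delta_i^*-\bar\Delta^*)/S$. Once $\bar\Delta^*>\Delta_{(2)}$, this is at most $-(\bar\Delta^*(t_0)-\Delta_{(2)})/S_{\max}<0$ uniformly. Here $S$ is bounded above because only $w_K$ grows, at rate at most $(\Delta_K^*-\Delta_{\min})/S(0)$ by monotonicity of $S$ on free intervals. Each such $w_i$ therefore hits $0$ in finite time. To reach the regime $\bar\Delta^*>\Delta_{(2)}$, we argue by induction on the distinct values of $\Delta^*$ from the bottom. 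The coordinates with the smallest $\Delta^*$ have strictly negative velocity, bounded away from zero, as long as the remaining mass is not all on them. They therefore vanish in finite time, which raises $\bar\Delta^*$ above the next level. Iterating this removes all $i\neq K$, so $p_K=1$ in finite time.

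\textbf{Step 3: case $\delta_0>0$, rate $O(t^{-1/3})$.} Eventually all $i\ne K$ are clamped at $\delta_0$, by the same induction, with clamping replacing vanishing. After that point, $\sum_{i\neq K} w_i=(N-1)\delta_0$ is constant, and $w_K$ evolves alone:
\[
\dot w_K=\frac{\Delta_K^*-\bar\Delta^*}{S}=\frac{(1-p_K)(\Delta_K^*-\tilde\Delta)}{S}\asymp \frac{c}{w_K^{2}},
\]
where $\tilde\Delta$ is the fixed mean over clamped coordinates, $1-p_K=(N-1)\delta_0/S$, and $S\approx w_K$. Integrating gives $w_K^3\sim 3ct$, so $1-p_K\asymp (N-1)\delta_0/w_K=O(t^{-1/3})$.

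The main obstacle is making the induction in Steps 2 and 3 rigorous. Velocities of low-$\Delta^*$ coordinates are only uniformly negative while $\bar\Delta^*$ sits strictly above their value. We must also exclude the degenerate case of ties among non-maximal indices, or handle ties by grouping them, and show that the projected flow is well defined. I would first try to handle this using the monotone Lyapunov quantity $\bar\Delta^*$ from Step 1 to obtain strict separation after finite time.
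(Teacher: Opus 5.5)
\textbf{Verdict: genuine gap.} Step~3, the case $\delta_0>0$, is not proved: the induction carried over from Step~2 breaks, and the lemma needed to show $\bar\Delta^*(t)\to\Delta_K^*$ is missing. Several of your preliminary structural facts are also false.

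\textbf{What works, and where it fails.} Your Step~2 is fine for $\delta_0=0$. A level that has reached $0$ carries no mass and $w_K>0$. At that moment $\bar\Delta^*$ is a convex combination of levels at least the next one, with positive weight on $\Delta_K^*$, so it lies strictly above the next level. Monotonicity of $\bar\Delta^*$ then keeps vanished levels at $0$.

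The induction does not survive replacing vanishing by clamping. A coordinate clamped at $\delta_0>0$ still contributes $\delta_0$ to $S$ and $\delta_0\Delta_i^*$ to $\sum_j w_j\Delta_j^*$. So clamping the lowest level does not lift $\bar\Delta^*$ above the next level. For instance, take $\Delta^*=(1,0,-100)$, $K=1$, $\delta_0=1$ and all initial weights slightly above $1$. The third weight is clamped almost at once, yet $\bar\Delta^*=(w_1-100)/S<0=\Delta_2^*$ until $w_1$ exceeds $100$, and meanwhile $w_2$ grows.

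\textbf{Your preliminary facts do not help.} The objective is homogeneous of degree $0$ in $\mathbf w$, so $\sum_i w_i\dot w_i=\sum_i p_i(\Delta_i^*-\bar\Delta^*)=0$ on free intervals. This conserves $\|\mathbf w\|_2$, not $S$. In fact $\dot S$ can have either sign: for $\Delta^*=(1,0)$, $\mathbf w=(1,3)$ gives $\dot S>0$ and $\mathbf w=(3,1)$ gives $\dot S<0$. Since the velocities do not scale with $w_i$, weights other than $w_K$ can grow and $w_i/w_K$ can increase. For $\Delta^*=(1,0.9,0)$, $\mathbf w=(10,1,10)$ gives $\tfrac{d}{dt}(w_2/w_1)>0$. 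Likewise $p_K$ is not monotone in general. For $\delta_0=0$ your $S_{\max}$ does exist, but for a different reason: clamped coordinates carry zero mass, so $\|\mathbf w\|_2$ is conserved for all time and $S\le\sqrt N\,\|\mathbf w(0)\|_2$. With that justification Step~2 stands.

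\textbf{The missing lemma.} You need a proof that $\bar\Delta^*(t)\to\Delta_K^*$, and the key step is a sublinear growth bound on $S$.
\begin{itemize}
\item Because $\dot S=S^{-1}\sum_{i\ \mathrm{free}}(\Delta_i^*-\bar\Delta^*)$, you have $\tfrac{d}{dt}S^2\le 2NR_\Delta$ with $R_\Delta=\Delta_K^*-\min_i\Delta_i^*$. Hence $S(t)^2\le S(0)^2+2NR_\Delta t$, and both $\int_0^\infty S^{-2}\,dt$ and $\int_0^\infty S^{-1}\,dt$ diverge.
\item Set $V=\Delta_K^*-\bar\Delta^*$. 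Coordinate $K$ is always free because $\dot w_K\ge 0$. Keeping only the $i=K$ term in your formula for $\dot{\bar\Delta}^*$ gives $\dot V\le -V^2/S^2$, so $1/V(t)\ge 1/V(0)+\int_0^t S^{-2}\,ds\to\infty$. Equivalently, your finite integrated variance gives $\int_0^\infty V^2S^{-2}\,dt<\infty$, which with monotonicity of $V$ forces $V\to 0$.
\item A linear bound $S\le S(0)+ct$, which is all your per-coordinate velocity estimate yields, would not suffice, since then $\int_0^\infty S^{-2}\,dt<\infty$.
\end{itemize}

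\textbf{Closing Step~3.} Once $\bar\Delta^*>\max_{i\ne K}\Delta_i^*$, every $i\ne K$ satisfies $\dot w_i\le -c/S$ while free. Divergence of $\int S^{-1}$ then clamps it in finite time, and monotonicity of $\bar\Delta^*$ keeps it clamped. A uniform $S_{\max}$ is not available here, since $S\to\infty$ when $\delta_0>0$. From that point your Step~3 computation is exact: $\dot S=C/S^2$ with $C=\delta_0\sum_{i\ne K}(\Delta_K^*-\Delta_i^*)$, giving $1-p_K=(N-1)\delta_0/S=O(t^{-1/3})$.

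This is the paper's route. It handles both values of $\delta_0$ uniformly and yields $p_K\to1$ directly from $V\ge d_{\min}(1-p_K)$, where $d_{\min}=\min_{i\ne K}(\Delta_K^*-\Delta_i^*)$.
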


\begin{wrapfigure}{R}{0.4\linewidth}
    \centering
    \includegraphics[width=\linewidth]{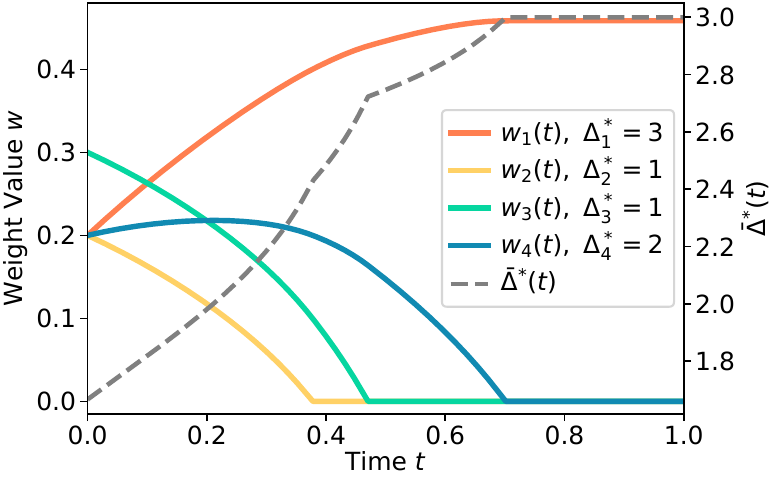}
    \vspace{-20pt}
    \caption{
    Simulation of the weight dynamics in Theorem \ref{thm:weight_concentration}.
    Curves show the weights $w_i(t)$, with fixed pseudo-labels $\Delta_i^*$ indicated in the legend. Initial weights are $(0.2,0.2,0.3,0.2)$,
    and $\delta_0$ is $10^{-6}$.
    }
    \vspace{-10pt}
    \label{fig:weight_simulation}
\end{wrapfigure}

Different from \citet{du2026difficulty}, who use weight concentration to analyze gradient
signal-to-noise ratios, we examine its implications for sample discrimination.
The ScaleBiO objective eventually allocates nearly all weight to one training data point. At the beginning, the meta-network can differentiate high-quality data points from low-quality data points by assigning them different weights. As training progresses, however, its outputs become less discriminative. This creates a strange phenomenon that more training leads to worsened data selection even on the training set itself. 


\textbf{Empirical verification.} Figure~\ref{fig:weight_simulation} illustrates this behavior through a numerical simulation.
The weight $w_1$, associated with the largest pseudo-label, becomes dominant,
while the remaining weights reach the lower bound.
Notably, $w_4$ initially increases but later decreases. Between $t=0.4$ and $t=0.6$, the weights can separate good data with the highest pseudo-labels, but this period is transient. Some data selection can be achieved by carefully tuning early stopping (consistent with Figure 9 of \cite{calian2026datarater}) but it requires significant trial and error. 

To verify these dynamics, we construct a controlled dataset by mixing GSM8K and unsafe examples at a $2{:}1$ ratio, together with a disjoint unseen set of the same composition. This way, we know exactly what data points are unsafe and should be discarded. 
We rank examples by predicted weights and evaluate how accurately the bottom
one-third identifies unsafe examples on both seen and unseen data.
Experimental details are provided in Appendix~\ref{app:toy_exp_details}. Figure~\ref{fig:weight_dynamics} compares ScaleBiO with
$\smash{\mathcal{L}_{\bm\phi}^{\mathrm{SBO}}}$, \shortname with
$\smash{\mathcal{L}_{\bm\phi}^{\mathrm{SBO}}}$, and \shortname with
$\smash{\mathcal{L}_{\bm\phi}^{\mathrm{PVM}}}$.

\begin{figure*}[t]
    \centering
    \includegraphics[width=\textwidth]{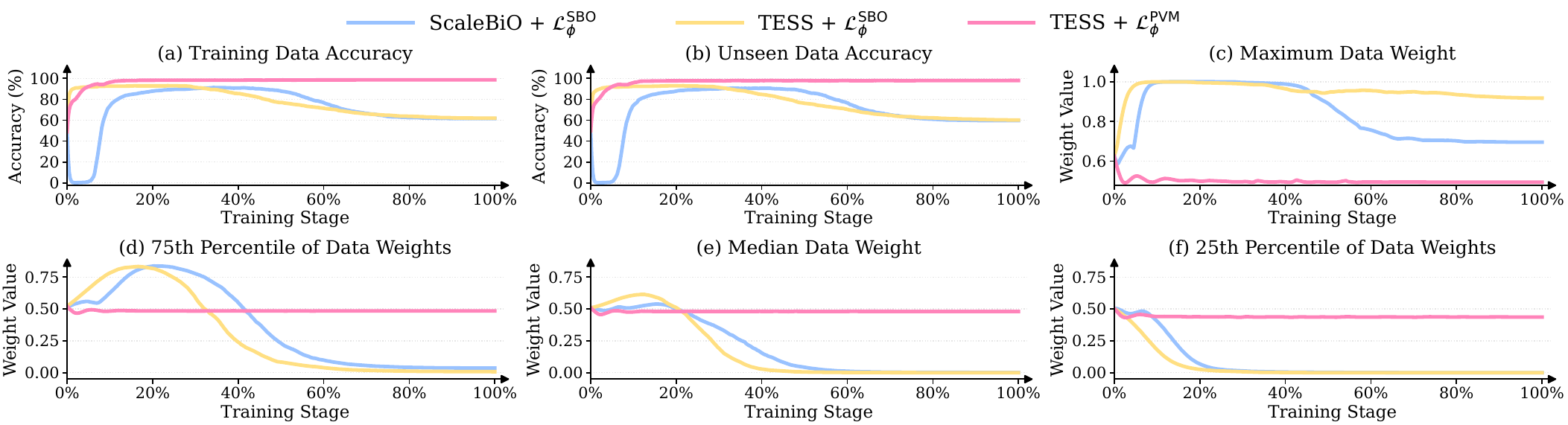}
    \vspace{-16pt}
    \caption{Training dynamics of ScaleBiO and \shortname under different meta-network
    objectives. With SBO loss, most predicted
    weights shrink toward zero as accuracy declines.
    In contrast, PVM loss maintains
    nonvanishing weight quantiles and high accuracy.
    }
    \label{fig:weight_dynamics}
\end{figure*}

Under $\smash{\mathcal{L}_{\bm\phi}^{\mathrm{SBO}}}$, the 25th percentile,
median, and 75th percentile of the predicted weights progressively approach
zero, consistent with the competitive suppression analyzed above. The maximum weight initially increases but later decreases. This is expected
because a shared selection network typically produces smoother outputs than a
one-hot vector~\citep{ma2021linear}, causing even dominant weights to decrease
as most outputs are suppressed.

Weight suppression accompanies declining discrimination.
For ScaleBiO, seen and unseen accuracy first improve but eventually fall to
around $60\%$. \shortname with the same
$\smash{\mathcal{L}_{\bm\phi}^{\mathrm{SBO}}}$ exhibits a similar long-term decline,
although it improves immediately at the beginning of training. We attribute
this early difference to the quality of the supervision signal: ScaleBiO
updates the meta-network while the two LLMs are still being optimized, whereas
\shortname computes loss gaps from well-trained, fixed LLMs. In contrast, \shortname with $\smash{\mathcal{L}_{\bm\phi}^{\mathrm{PVM}}}$ maintains
stable, nonvanishing weight quantiles and reaches $98.58\%$ seen accuracy and $97.96\%$ unseen accuracy.

\subsection{Easy-Feature Dominance and Shortcut Learning}
\label{sec:shortcut_learning}

The previous subsection links weight suppression to reduced sample
discrimination. We now examine a second limitation: ScaleBiO can continue
favoring easy-to-learn features after they have been substantially fitted,
whereas PVM can shift learning toward harder features that explain the
remaining prediction error. This distinction matters for generalization
when an easy feature is predictive on the training data but
unreliable on unseen data, thereby acting as a shortcut
\citep{hermann2024foundations,geirhos2020shortcut}.

Denote the empirical covariance between random variables $u$ and $v$ as 
$\widehat{\operatorname{Cov}}_N(u,v)
:=N^{-1}\sum_{i=1}^{N}(u_i-\bar u)(v_i-\bar v).$
Using fixed pseudo-labels $\Delta_i=\Delta_i(\bm\theta_U^*,\bm\theta_W^*)$, for two data features $e$ and $h$, we denote their covariance with pseudo-labels as
\begin{equation}
c_e=\widehat{\operatorname{Cov}}_N(\Delta,e),
\qquad
c_h=\widehat{\operatorname{Cov}}_N(\Delta,h).
\end{equation}
We call $e$ the \emph{easy feature} and $h$ the \emph{hard feature} if $|c_e|>|c_h|$. For each training example $x_i$, let $e_i$ and $h_i$ be its easy and hard features, both standardized to zero mean and unit standard deviation. For simplicity, we analyze a one-layer meta-network with parameters $\bm\phi=(\beta_0,\beta_e,\beta_h)$ to predict weights for $x_i$. It predicts $w_i=s_{\bm\phi}(x_i)=a_i$, where $a_i=\beta_0+\beta_e e_i+\beta_h h_i$ is the output logit.

\begin{table}[t]
\centering
\scriptsize
\setlength{\tabcolsep}{4.5pt}
\caption{
Safe--unsafe data classification accuracy under different shortcut strengths. $\mathrm{Cov}$ and $\rho$ denote the empirical covariance and Pearson correlation coefficient between the shortcut feature and the label, respectively. Early, middle, and late stages correspond to $20\%$, $50\%$, and
$80\%$ of meta-network training, respectively.
$\smash{\mathrm{Acc}_{\mathrm{S}}}$ and
$\smash{\mathrm{Acc}_{\mathrm{U}}}$ denote seen and unseen accuracy, and
$\smash{\mathrm{Gap}=\mathrm{Acc}_{\mathrm{S}}-\mathrm{Acc}_{\mathrm{U}}}$
denotes the generalization gap.
}
\label{tab:shortcut_generalization}

\begin{tabular}{c|c|ccc|ccc|ccc|c}
\toprule
\multirow{2}{*}[-0.9ex]{\shortstack{Shortcut\\($\mathrm{Cov}$ / $\rho$})}
& \multirow{2}{*}{Method}
& \multicolumn{3}{c|}{Early Stage}
& \multicolumn{3}{c|}{Middle Stage}
& \multicolumn{3}{c|}{Late Stage}
& \multirow{2}{*}[-0.5ex]{\shortstack{Avg.\\Gap}} \\
\cmidrule(lr){3-5}
\cmidrule(lr){6-8}
\cmidrule(lr){9-11}
&
& $\mathrm{Acc}_{\mathrm{S}}$
& $\mathrm{Acc}_{\mathrm{U}}$
& $\mathrm{Gap}$
& $\mathrm{Acc}_{\mathrm{S}}$
& $\mathrm{Acc}_{\mathrm{U}}$
& $\mathrm{Gap}$
& $\mathrm{Acc}_{\mathrm{S}}$
& $\mathrm{Acc}_{\mathrm{U}}$
& $\mathrm{Gap}$
& \\
\midrule

\multirow{3}{*}{\shortstack{Strong\\0.17 / 0.57}}
& TESS + $\mathcal{L}_{\bm\phi}^{\mathrm{PVM}}$
& \textbf{98.84}
& \textbf{79.38}
& \underline{19.46}
& \textbf{99.56}
& \textbf{84.62}
& \textbf{14.94}
& \textbf{99.56}
& \textbf{84.97}
& \textbf{14.59}
& \textbf{16.33} \\

& TESS + $\mathcal{L}_{\bm\phi}^{\mathrm{SBO}}$
& \underline{97.96}
& 72.44
& 25.52
& 90.22
& 52.26
& 37.96
& \underline{84.71}
& \underline{46.31}
& \underline{38.40}
& 33.96 \\

& ScaleBiO + $\mathcal{L}_{\bm\phi}^{\mathrm{SBO}}$
& 93.42
& \underline{78.84}
& \textbf{14.58}
& \underline{90.76}
& \underline{57.73}
& \underline{33.03}
& 76.00
& 32.44
& 43.56
& \underline{30.39} \\

\midrule

\multirow{3}{*}{\shortstack{Weak\\0.06 / 0.41}}
& TESS + $\mathcal{L}_{\bm\phi}^{\mathrm{PVM}}$
& \textbf{98.04}
& \textbf{91.47}
& \textbf{6.57}
& \textbf{98.67}
& \textbf{93.51}
& \textbf{5.16}
& \textbf{98.76}
& \textbf{93.60}
& \textbf{5.16}
& \textbf{5.63} \\

& TESS + $\mathcal{L}_{\bm\phi}^{\mathrm{SBO}}$
& \underline{96.53}
& 80.71
& 15.82
& 87.82
& 65.78
& 22.04
& \underline{84.00}
& \underline{60.53}
& 23.47
& 20.44 \\

& ScaleBiO + $\mathcal{L}_{\bm\phi}^{\mathrm{SBO}}$
& 91.64
& \underline{82.40}
& \underline{9.24}
& \underline{90.13}
& \underline{71.73}
& \underline{18.40}
& 75.56
& 55.56
& \underline{20.00}
& \underline{15.88} \\

\bottomrule
\end{tabular}
\end{table}

To examine the learning preference for features, we compare the gap between gradients of easy and hard features in PVM and ScaleBiO training.




\begin{theorem}[Contrasting gradient-gap dynamics]
\label{thm:feature_learning_comparison}
At training step $t$, denote the feature gradients by
$\smash{G_k^q(\bm\phi_t):=\partial\mathcal L_{\bm\phi_t}^q/\partial\beta_{k,t}}$,
where $k\in\{e,h\}$ and $q\in\{\mathrm{SBO},\mathrm{PVM}\}$. Define gradient gap $ D_t^q=\left(G_e^q(\bm{\phi}_t)\right)^2-\left(G_h^q(\bm{\phi}_t)\right)^2$, then $D_t^{\mathrm{PVM}}$ decreases exponentially with $t$ while $D_t^{\mathrm{SBO}}$ remains non-decreasing. Specifically, when initializing $\beta_e=\beta_h=0$ and $\beta_0>0$, there exists a training step $T>0$ and $\lambda\in(0,1)$, such that $\forall t<T$, 
\begin{equation}
    D_t^{\mathrm{PVM}}=(1-\lambda)^t D_0^{\mathrm{PVM}},
    \qquad
    D_t^{\mathrm{SBO}}\ge D_{t-1}^{\mathrm{SBO}}.
    \label{eq:gradient-gap difference}
\end{equation}

\end{theorem}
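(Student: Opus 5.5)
We work with the one-layer meta-network $a_i=\beta_0+\beta_e e_i+\beta_h h_i$ and plain gradient descent with step size $\eta$, $\bm\phi_{t+1}=\bm\phi_t-\eta\nabla\mathcal L^q$. We will further assume (as the standardized setting suggests) that $e$ and $h$ are uncorrelated, $\widehat{\operatorname{Cov}}_N(e,h)=0$. The plan is to write both losses' gradients in closed form and track the evolution of $G_e^q$ and $G_h^q$ separately.

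\textbf{PVM.} Since $N^{-1}\sum_i e_i=0$ and $N^{-1}\sum_i e_i^2=1$, the gradients are
\[
G_e^{\mathrm{PVM}}=\frac{2}{N}\sum_i(a_i-\Delta_i)e_i=2(\beta_e-c_e),\qquad G_h^{\mathrm{PVM}}=2(\beta_h-c_h),
\]
together with $G_0^{\mathrm{PVM}}=2(\beta_0-\bar\Delta)$. The three coordinates decouple. The update gives $\beta_{k,t+1}-c_k=(1-2\eta)(\beta_{k,t}-c_k)$, so $G_{k,t}^{\mathrm{PVM}}=(1-2\eta)^tG_{k,0}^{\mathrm{PVM}}$ for $k\in\{e,h\}$. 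Squaring yields $D_t^{\mathrm{PVM}}=(1-2\eta)^{2t}D_0^{\mathrm{PVM}}$. Setting $1-\lambda=(1-2\eta)^2$, we get $\lambda\in(0,1)$ whenever $0<\eta<1/2$. Since $D_0^{\mathrm{PVM}}=4(c_e^2-c_h^2)>0$ at $\beta_e=\beta_h=0$, this identity holds for all $t$, so any $T$ works.

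\textbf{SBO.} For a generic parameter, $\partial\mathcal L^{\mathrm{SBO}}/\partial\beta=\sum_i(\partial\mathcal L/\partial w_i)(\partial w_i/\partial\beta)$. Using Eq.~\ref{eq:sb_weight_gradient} with $w_i=a_i$, this gives
\[
G_k^{\mathrm{SBO}}=-\frac{1}{S}\sum_i(\Delta_i-\bar\Delta^*)k_i,\qquad S=\sum_i a_i,\quad k\in\{e,h\},
\]
and $G_0^{\mathrm{SBO}}=-\frac{1}{S}\sum_i(\Delta_i-\bar\Delta^*)=0$. Hence $\beta_0$ stays fixed and $S=N\beta_0>0$ is constant. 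At initialization $p_i=1/N$, so $\bar\Delta^*=\bar\Delta$ and $G_{k,0}^{\mathrm{SBO}}=-c_k/\beta_0$.

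In general, writing $\bar\Delta^*=\bar\Delta+\widehat{\operatorname{Cov}}_N(\Delta,a)/\beta_0$ and using $\sum_ik_i=0$, we get
\[
G_k^{\mathrm{SBO}}=-\frac{c_k}{\beta_0}.
\]
The correction term $\bar\Delta^*-\bar\Delta$ multiplies $\sum_i k_i=0$ and drops out. So, as long as all $a_i$ stay positive (the constraint $w_i>0$, or $w_i\ge\delta_0$ is inactive), the SBO feature gradients are exactly constant. Consequently $D_t^{\mathrm{SBO}}=(c_e^2-c_h^2)/\beta_0^2=D_{t-1}^{\mathrm{SBO}}$, which in particular satisfies $D_t^{\mathrm{SBO}}\ge D_{t-1}^{\mathrm{SBO}}$. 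Thus $\beta_e$ keeps growing linearly along $-G_e$ and never saturates, while PVM's gradient for $\beta_e$ vanishes once $\beta_e\approx c_e$.

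\textbf{Choosing $T$.} We take $T$ to be the first step at which some $a_i=\beta_0+\beta_{e,t}e_i+\beta_{h,t}h_i$ reaches the lower bound $\delta_0$, or $0$. Since $\beta_{k,t}=\eta t c_k/\beta_0$ grows linearly and $\beta_0>0$, we have $T\ge\beta_0^2/(\eta\max_i|c_ee_i+c_hh_i|)>0$. Before $T$ the clipping is inactive and the formulas above are exact.

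\textbf{Main obstacle.} The step I expect to be delicate is justifying that clipping and positivity are inactive before $T$, and handling a non-identity positivity map if the paper intends $s_{\bm\phi}$ to include one, e.g.\ a clamp at $\delta_0$. Under a clamp, the $p_i$ of clipped examples freeze, and $\sum_i k_i$ over the active set no longer vanishes. I would avoid this by restricting to $t<T$, which is exactly why the theorem only claims the result for $t<T$.

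A second subtlety is the uncorrelatedness of $e$ and $h$. If they were correlated, the PVM recursion would become a $2\times2$ linear system with matrix $I-2\eta\Sigma$. Exact geometric decay of $D_t$ with a single rate would then fail, and I would have to fall back to a bound in terms of $\lambda_{\min}(\Sigma)$ rather than the stated equality.
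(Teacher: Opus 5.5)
\textbf{The SBO bias gradient does not vanish.} Your step $G_0^{\mathrm{SBO}}=-\frac{1}{S}\sum_i(\Delta_i-\bar\Delta^*)=0$ confuses the weighted and unweighted means. It is $\sum_i p_i(\Delta_i-\bar\Delta^*)$ that is zero, whereas $\sum_i(\Delta_i-\bar\Delta^*)=N(\bar\Delta-\bar\Delta^*)$. Your own identity gives $\bar\Delta^*=\bar\Delta+z_t/\beta_{0,t}$ with $z_t:=c_e\beta_{e,t}+c_h\beta_{h,t}=\widehat{\operatorname{Cov}}_N(\Delta,a)$. So in fact $G_0^{\mathrm{SBO}}=N^2z_t/S_t^2$, which vanishes only at $t=0$. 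Equivalently, the normalized loss is invariant under $\bm\phi\mapsto c\bm\phi$, so $\beta_0G_0+\beta_eG_e+\beta_hG_h=0$; with $G_k=-c_k/\beta_0$ this forces $G_0=z_t/\beta_{0,t}^2$.

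Hence $\beta_0$ is not frozen and $S_t$ is not constant. Your formula $\beta_{k,t}=\eta t c_k/\beta_0$ fails (the feature coefficients grow faster than linearly), so your explicit lower bound on $T$ is unjustified. Nor is $D_t^{\mathrm{SBO}}$ constant: in fact $D_t^{\mathrm{SBO}}>D_{t-1}^{\mathrm{SBO}}$ for $t\ge2$. The inequality you need still holds, but only through the step the paper's proof rests on and that you skipped:
\begin{itemize}
\item from $z_0=0$ and $z_{t+1}=z_t+\eta(N/S_t)(c_e^2+c_h^2)$, get $z_t\ge0$;
\item hence $G_0^{\mathrm{SBO}}\ge0$, so $\beta_{0,t}$ and $S_t=N\beta_{0,t}$ are non-increasing;
\item therefore $D_t^{\mathrm{SBO}}=(N/S_t)^2(c_e^2-c_h^2)$ is non-decreasing.
\end{itemize}

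\textbf{The assumption $\widehat{\operatorname{Cov}}_N(e,h)=0$ is extra, and your claim about the correlated case is wrong.} The theorem imposes no such condition, so your PVM argument covers only a special case. Let $\rho:=N^{-1}\sum_ie_ih_i$. The paper shows the full gradient obeys $g_{t+1}=(I-2\eta Q)g_t$, and centering decouples the bias. The feature block gives $G_{e,t+1}=(1-2\eta)G_{e,t}-2\eta\rho G_{h,t}$ and $G_{h,t+1}=(1-2\eta)G_{h,t}-2\eta\rho G_{e,t}$. Expanding $G_e^2-G_h^2$, the cross terms cancel, so $D_{t+1}^{\mathrm{PVM}}=[(1-2\eta)^2-4\eta^2\rho^2]\,D_t^{\mathrm{PVM}}$ exactly.

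Put differently, $G_e\pm G_h$ are the eigen-coordinates of the symmetric update, with eigenvalues $1-2\eta(1\pm\rho)$. Since $D=(G_e+G_h)(G_e-G_h)$ is their product, a single exact rate appears, even though $G_e$ and $G_h$ individually are not geometric sequences. This yields $\lambda=4\eta-4\eta^2(1-\rho^2)\in(0,1)$ for $0<\eta<1/(2+2|\rho|)$. No $\lambda_{\min}$ bound is needed, and your $\rho=0$ computation is just the special case $\lambda=4\eta-4\eta^2$.
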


The proof is provided in Appendix~\ref{app:feature_learning_proof}. This comparison explains
how PVM can gradually balance the information learned from easy and hard features, while ScaleBiO continues to prefer easy features. 
This links the training objective to shortcut learning. When the easy
feature is predictive on the training data but unreliable on unseen data,
ScaleBiO's persistent preference can reinforce reliance on this shortcut.
In contrast, PVM mitigates the gradient gap between the easy feature and the hard feature. This provides a
mechanism for \shortname to move beyond training-specific shortcuts and
improve data selection on unseen examples.

\textbf{Empirical verification.}
To test shortcut reliance, we add shortcut features to the seen training
data and flip them on unseen data (Appendix~\ref{app:toy_exp_details}). We use the gap between seen and unseen dataset accuracy as a proxy for generalization, with smaller gaps
indicating better generalization on unseen data. Table~\ref{tab:shortcut_generalization} shows clear
differences in training dynamics. Under both shortcut strengths, \shortname
with $\smash{\mathcal L_{\bm\phi}^{\mathrm{PVM}}}$ improves unseen accuracy and
reduces the gap from early to late training. In contrast,
$\smash{\mathcal L_{\bm\phi}^{\mathrm{SBO}}}$ leads to lower unseen accuracy and
larger gaps over training in both frameworks. These results are consistent
with our analysis that SBO persistently learning more shortcut features, and
support PVM as a more effective objective for learning transferable
meta-networks. Moreover, all variants exhibit larger average gaps under strong
shortcuts, indicating that stronger spurious correlations make
generalization more difficult.
\vspace{-1em}
\section{Experiments}
\vspace{-0.5em}
\subsection{LLM Safety}
\label{sec:safety_exp}

We evaluate whether a meta-network can identify training examples that increase harmful behavior finetuning a target LLM. 
We use Alpaca~\citep{alpaca,peng2023instruction} and
Dolly~\citep{DatabricksBlog2023DollyV2} as candidate pools
$\mathcal D_{\mathrm{pool}}$.
The validation set from \citet{he2024your} contains 100 harmful
instructions paired with safe refusal responses.
We finetune the target LLM on 1,000 lowest-scoring examples and evaluate
it on DirectHarm4 (DH4)~\citep{lyu2024keeping},
HarmBench (HB)~\citep{mazeika2024harmbench}, and
HEx-PHI (HEx)~\citep{qi2024fine}.
We use LlamaGuard 3~\citep{dubey2024llama3herdmodels} as the classifier to compute attack success rate (ASR) from generated responses; higher ASR indicates more effective harmful-data selection.
The target models are Llama-3-8B-Instruct,
Llama-3.1-8B-Instruct~\citep{dubey2024llama3herdmodels}, and
Qwen2.5-7B-Instruct~\citep{qwen_qwen25_2025}. Implementation details are provided in Appendix~\ref{app:experiment_details}.

\textbf{Settings and Baselines.} We consider two settings.
In \textit{Training}, the meta-network is trained and applied on the same
candidate dataset, with
$\mathcal D_{\mathrm{train}}=\mathcal D_{\mathrm{pool}}$.
In \textit{Generalization}, it is trained on Alpaca and directly applied to Dolly,
or vice versa, with
$\mathcal D_{\mathrm{train}}\cap\mathcal D_{\mathrm{pool}}=\varnothing$.
We compare against the random selection baseline and two groups of task-agnostic methods. GradSafe~\citep{xie2024gradsafe} and
Bi-Anchoring (Bi-Anchor)~\citep{he2024your} are heuristic methods that rank examples using distances relative to the validation set. SEAL~\citep{shen2025seal} and
ScaleBiO~\citep{pan2025scalebio} are meta-learning methods that learn data weights. We create two variants for each. SEAL$_w$ and SBO$_w$ optimize one weight per
example, whereas SEAL$_\phi$ and SBO$_\phi$ train a
meta-network $s_{\bm\phi}$ to predict weights. 

\begin{table*}[t]
\centering
\scriptsize
\setlength{\tabcolsep}{3.5pt}
\caption{
Attack success rate (ASR, \%; higher is better) for harmful-data selection
on Alpaca and Dolly. Best and second-best results are \textbf{bold} and
\underline{underlined}, ranked separately within two settings. 
}
\label{tab:safety_main}

\resizebox{\textwidth}{!}{%
\begin{tabular}{ll|c|cc|ccccc|ccc}
\toprule
\multicolumn{2}{c|}{Setting}
& \multicolumn{8}{c|}{Training}
& \multicolumn{3}{c}{Generalization} \\
\cmidrule(lr){1-2}
\cmidrule(lr){3-10}
\cmidrule(lr){11-13}
\multirow{2}{*}{Dataset}
& \multirow{2}{*}{Bench}
& \multirow{2}{*}{Random}
& \multicolumn{2}{c|}{Task-agnostic Heuristics}
& \multicolumn{8}{c}{Learnable Weighting} \\
\cmidrule(lr){4-5}
\cmidrule(lr){6-13}
& &
& GradSafe
& Bi-Anchor
& SEAL$_w$
& SBO$_w$
& SEAL$_\phi$
& SBO$_\phi$
& \shortname
& SEAL$_\phi$
& SBO$_\phi$
& \shortname \\
\midrule

\rowcolor{gray!10}
\multicolumn{13}{c}{
    \rule[-0.9ex]{0pt}{3.0ex}
    \textit{Target LLM: Llama3-8B-Instruct}
} \\
\multirow{3}{*}{Alpaca}
& DH4
& 25.00 & 28.00 & \textbf{49.00}
& 26.75 & \underline{38.25} & 12.50 & 12.50 & 36.75
& \underline{8.50} & 6.50 & \textbf{35.50} \\
& HB
& 15.00 & 16.00 & \textbf{35.00}
& 13.50 & \underline{21.00} & 9.00 & 7.00 & 20.50
& \underline{6.00} & 4.00 & \textbf{25.00} \\
& HEx
& 6.55 & 8.97 & \textbf{24.58}
& 6.90 & 10.69 & 5.86 & 3.44 & \underline{11.38}
& 3.10 & \underline{3.44} & \textbf{15.52} \\
\cmidrule(lr){1-13}
\multirow{3}{*}{Dolly}
& DH4
& 55.25 & \underline{75.00} & 74.50
& 60.75 & 69.25 & 73.00 & 74.50 & \textbf{82.25}
& \underline{56.25} & 43.00 & \textbf{71.25} \\
& HB
& 39.25 & \underline{82.00} & 75.00
& 59.50 & 73.00 & 71.25 & 69.50 & \textbf{85.50}
& \underline{43.50} & 27.50 & \textbf{59.50} \\
& HEx
& 31.38 & \underline{74.14} & 67.59
& 42.76 & 53.44 & 61.03 & 63.44 & \textbf{81.72}
& \underline{30.69} & 22.07 & \textbf{52.41} \\
\cmidrule(lr){1-13}
\multicolumn{2}{c|}{Average}
& 28.74 & 47.35 & \textbf{54.28}
& 35.03 & 44.27 & 38.77 & 38.40 & \underline{53.02}
& \underline{24.67} & 17.75 & \textbf{43.20} \\
\midrule

\rowcolor{gray!10}
\multicolumn{13}{c}{
    \rule[-0.9ex]{0pt}{3.0ex}
    \textit{Target LLM: Llama3.1-8B-Instruct}
} \\
\multirow{3}{*}{Alpaca}
& DH4
& 22.50 & 7.50 & 11.00
& \underline{27.75} & 20.50 & 8.50 & 5.50 & \textbf{40.25}
& 2.25 & \underline{11.00} & \textbf{34.25} \\
& HB
& \textbf{18.50} & 5.00 & 12.50
& \underline{13.00} & 8.00 & 7.50 & 3.50 & \textbf{18.50}
& 2.50 & \underline{7.00} & \textbf{16.50} \\
& HEx
& \underline{8.97} & 3.45 & 3.10
& 6.90 & 2.41 & 2.76 & 1.03 & \textbf{12.41}
& 1.03 & \underline{4.83} & \textbf{9.66} \\
\cmidrule(lr){1-13}
\multirow{3}{*}{Dolly}
& DH4
& 54.00 & 59.50 & 67.25
& 71.75 & 63.75 & 64.00 & \underline{74.50} & \textbf{81.00}
& \underline{71.50} & 60.25 & \textbf{80.25} \\
& HB
& 51.00 & 60.50 & 50.50
& \underline{65.00} & 58.50 & 55.00 & 60.00 & \textbf{80.00}
& \underline{69.00} & 50.50 & \textbf{81.00} \\
& HEx
& 29.30 & 33.79 & 40.00
& 38.62 & 33.10 & 33.45 & \underline{42.07} & \textbf{60.34}
& \underline{40.69} & 28.28 & \textbf{71.38} \\
\cmidrule(lr){1-13}
\multicolumn{2}{c|}{Average}
& 30.71 & 28.29 & 30.73
& \underline{37.17} & 31.04 & 28.54 & 31.10 & \textbf{48.75}
& \underline{31.16} & 26.98 & \textbf{48.84} \\
\midrule

\rowcolor{gray!10}
\multicolumn{13}{c}{
    \rule[-0.9ex]{0pt}{3.0ex}
    \textit{Target LLM: Qwen2.5-7B-Instruct}
} \\
\multirow{3}{*}{Alpaca}
& DH4
& 27.50 & 26.00 & \textbf{44.50}
& 20.00 & 27.00 & \underline{28.25} & 11.75 & \textbf{44.50}
& 8.75 & \underline{11.00} & \textbf{38.75} \\
& HB
& 11.00 & 10.00 & \textbf{24.50}
& 9.00 & 8.00 & 17.50 & 3.50 & \underline{23.50}
& \underline{3.00} & 2.50 & \textbf{16.00} \\
& HEx
& 13.10 & 12.07 & \underline{24.80}
& 6.55 & 8.28 & 9.31 & 6.56 & \textbf{24.83}
& \underline{7.24} & 6.56 & \textbf{18.28} \\
\cmidrule(lr){1-13}
\multirow{3}{*}{Dolly}
& DH4
& 50.50 & \underline{66.50} & 60.50
& 49.75 & 65.75 & 60.25 & 59.75 & \textbf{86.50}
& 48.00 & \underline{62.25} & \textbf{84.50} \\
& HB
& 36.00 & 60.00 & 60.50
& \underline{65.50} & 61.50 & 43.00 & 45.50 & \textbf{87.00}
& 34.00 & \underline{50.50} & \textbf{83.50} \\
& HEx
& 32.41 & \underline{51.03} & 42.07
& \underline{51.03} & 50.34 & 45.86 & 44.48 & \textbf{88.62}
& 27.59 & \underline{29.66} & \textbf{80.00} \\
\cmidrule(lr){1-13}
\multicolumn{2}{c|}{Average}
& 28.42 & 37.60 & \underline{42.81}
& 33.64 & 36.81 & 34.03 & 28.59 & \textbf{59.16}
& 21.43 & \underline{27.08} & \textbf{53.51} \\
\midrule

\multicolumn{2}{c|}{Overall Average}
& 29.29 & 37.75 & \underline{42.61}
& 35.28 & 37.38 & 33.78 & 32.70 & \textbf{53.64}
& \underline{25.76} & 23.94 & \textbf{48.51} \\
\bottomrule
\end{tabular}%
}
\end{table*}

\textbf{Main Results.}
\label{sec:safety_results}
Table~\ref{tab:safety_main} shows the results. Under \textit{Training} settings, \shortname achieves the strongest performance, exceeding the strongest learning  baseline
SBO$_w$ by $16.27\%$ and the strongest heuristic
Bi-Anchor by $11.04\%$. Although Bi-Anchor remains stronger on
Llama3-Alpaca, \shortname achieves the best in
14 of the 18 comparisons. These gains stem from the optimization stability of \shortname. In contrast, simply replacing individual weights with a shared meta-network does not consistently help: SEAL$\bm\phi$ and SBO$_\phi$ underperform their weight-based counterparts by $1.50$ and $4.68$ points on average.
This suggests that the training objective is critical for effective selection.

Under the \textit{Generalization} setting, \shortname outperforms SEAL$_\phi$ and SBO$_\phi$ in all 18 evaluations, surpassing SEAL$_\phi$ and SBO$_\phi$ by $22.76$ and $24.58$ points in average ASR, respectively, while dropping only $5.13\%$ from its training
performance. These results support our analysis that PVM learns more transferable selection signals rather than relying on training-specific shortcuts.

\textbf{Small-to-Large Model Transfer.}
We further evaluate whether a meta-network trained using a smaller LLM can
select useful data for a larger model. We train the meta-network with Qwen2.5-0.5B-Instruct and use it to select data for fine-tuning Qwen2.5-7B-Instruct.
As shown in Table~\ref{tab:safety_transfer}
(Appendix~\ref{app:small2big}), this reduces training time by $5.65$--$6.60\times$ while retaining $77.53\%$ and $83.67\%$ of the corresponding large-model performance under the \textit{Training} and \textit{Generalization} settings. This demonstrates that \shortname can transfer selection signals across model scales with substantially lower training cost. This offers a promising direction for scaling data selection to larger models efficiently.

\textbf{Ablation Study.}
\label{sec:ablation_loss}
We ablate the training losses within the \shortname framework
on Qwen2.5-7B-Instruct, comparing PVM Loss (\textit{defalt}), SBO Loss, KL Loss, together with Pseudo-Label Ranking, which directly ranks examples by $\Delta^*$ without training a meta-network. KL Loss (Appendix~\ref{app:KL_loss}) matches normalized meta-network outputs to a pseudo-label-reweighted target
distribution that favors examples with higher estimated utility.

Under the \textit{Training} setting, \shortname with PVM Loss achieves
the highest ASR across all six dataset--benchmark pairs, outperforming SBO Loss and KL Loss by $21.60\%$ and $16.24\%$, respectively. These gains stem from the more stable optimization of PVM.
Notably, PVM Loss also outperforms \emph{Pseudo-Label Ranking} by
$18.15\%$.
One possible explanation is that the meta-network may implicitly denoise pseudo-labels by learning shared
patterns \citep{liu2020early,li2020gradient}. Under the \textit{Generalization} setting, PVM Loss exceeds SBO Loss and KL Loss by $24.61\%$ and $25.07\%$ on average, respectively.
Moreover, its average ASR decreases by only $5.65$ points
relative to the \textit{Training} setting, compared with drops of
$8.67\%$ for SBO Loss and $14.49\%$ for KL Loss.
The stronger unseen-data performance and smaller degradation indicate that PVM learns a more transferable meta-network.

\begin{table*}[t]
\centering
\scriptsize
\caption{
Ablation of Different Training Losses for the Meta-network on Qwen2.5-7B-Instruct.}
\label{tab:ablation_selector_loss}

\begin{tabular}{@{}llccccccc@{}}
\toprule
\multirow{2}{*}{Setting}
& \multirow{2}{*}{Method}
& \multicolumn{3}{c}{Alpaca}
& \multicolumn{3}{c}{Dolly}
& \multirow{2}{*}{Avg.} \\
\cmidrule(lr){3-5}
\cmidrule(lr){6-8}
&
& DH4 & HB & HEx
& DH4 & HB & HEx
& \\
\midrule

\multirow{4}{*}{Training}
& TESS w/ PVM Loss
& \textbf{44.50}
& \textbf{23.50}
& \textbf{24.83}
& \textbf{86.50}
& \textbf{87.00}
& \textbf{88.62}
& \textbf{59.16} \\

& TESS w/ SBO Loss
& 39.50
& \underline{19.50}
& 22.06
& 58.25
& 40.50
& 45.52
& $37.56_{\text{-21.60}}$ \\

& TESS w/ KL Loss
& \underline{40.75}
& 19.00
& \underline{23.86}
& 60.75
& 53.50
& 59.66
& $\underline{42.92}_{\text{-16.24}}$ \\

& Pseudo-Label Ranking
& 11.25
& 7.50
& 5.86
& \underline{75.75}
& \underline{75.00}
& \underline{70.69}
& $41.01_{\text{-18.15}}$ \\

\midrule

\multirow{3}{*}{Generalization}
& TESS w/ PVM Loss
& \textbf{38.75}
& \textbf{16.00}
& \textbf{18.28}
& \textbf{84.50}
& \textbf{83.50}
& \textbf{80.00}
& \textbf{53.51} \\

& TESS w/ SBO Loss
& \underline{12.00}
& \underline{9.50}
& \underline{7.24}
& 59.25
& \underline{44.00}
& 41.37
& $\underline{28.89}_{\text{-24.61}}$ \\

& TESS w/ KL Loss
& 11.75
& 4.00
& 6.55
& \underline{61.00}
& 43.50
& \underline{43.79}
& $28.43_{\text{-25.07}}$ \\

\bottomrule
\end{tabular}
\end{table*}

\vspace{-0.5em}
\subsection{Targeted Instruction Tuning}
\label{sec:targeted_tuning}
\vspace{-0.5em}
We evaluate if \shortname trained on a small subset can generalize to the full instruction tuning
corpus. 
GSM8K~\citep{cobbe2021training} and
Codex~\citep{chen2021evaluating} are the target tasks. We train the meta-network using
a random 50K-example subset of Tulu V2~\citep{ivison2025large} and a task-specific validation set, then apply it to select training data from the 197K full Tulu V2.
Following \citet{nayak2026critical}, we finetune Llama-2-7B on the
selected examples and report GSM8K accuracy and Codex pass@10.
Implementation details are provided in
Appendix~\ref{app:experiment_details}.
We compare against random selection, heuristic-based methods, and
learnable-weighting baselines.
The heuristic methods, LESS~\citep{xia2024less} and
RDS+~\citep{ivison2025large}, rank candidate examples using predefined
measures of relevance to the target validation set.
All four learnable baselines are trained on the same 50K subset. SEAL$_w$ and SBO$_w$ are restricted to select from their training set.

\begin{table}[t]
\vspace{-1em}
\centering
\scriptsize
\caption{
Llama-2-7B performance after fine-tuning on $N$ selected Tulu V2 examples.
}
\label{tab:targeted_reasoning}

\begin{tabular}{l|cccc|cccc|l}
\toprule
\multirow{2}{*}{Method}
& \multicolumn{4}{c|}{GSM8K}
& \multicolumn{4}{c|}{CodeX}
& \multirow{2}{*}[-0.9ex]{\shortstack{Overall\\Avg.}} \\
\cmidrule(lr){2-5}
\cmidrule(lr){6-9}
& $N$=1K
& $N$=5K
& $N$=10K
& Avg.
& $N$=5K
& $N$=10K
& $N$=20K
& Avg.
& \\
\midrule

Random
& 13.41
& 15.92
& 16.13
& 15.15
& \underline{29.05}
& \underline{27.03}
& 27.70
& 27.93
& 21.54 \\

LESS
& \underline{18.80}
& \underline{22.36}
& 22.59
& \underline{21.25}
& 25.00
& 26.35
& 28.37
& 26.57
& 23.91$_{\text{+2.37}}$ \\

RDS+
& 17.43
& 21.60
& \underline{23.57}
& 20.87
& 28.37
& 26.35
& \textbf{29.73}
& \underline{28.15}
& \underline{24.51}$_{\text{+2.97}}$ \\

SEAL$_w$
& 14.78
& 16.37
& 17.89
& 16.35
& 24.32
& 25.00
& 25.68
& 25.00
& 20.67$_{\text{-0.87}}$ \\

SBO$_w$
& 15.01
& 17.74
& 18.65
& 17.13
& 26.35
& 25.68
& 27.03
& 26.35
& 21.74$_{\text{+0.20}}$ \\

SEAL$_\phi$
& 17.74
& 19.56
& 19.48
& 18.93
& 24.32
& \underline{27.03}
& 26.35
& 25.90
& 22.41$_{\text{+0.87}}$ \\

SBO$_\phi$
& 17.43
& 18.65
& 19.03
& 18.37
& 26.35
& 25.68
& 27.70
& 26.58
& 22.47$_{\text{+0.93}}$ \\

Ours
& \textbf{20.43}
& \textbf{25.47}
& \textbf{25.70}
& \textbf{23.87}
& \textbf{29.73}
& \textbf{28.37}
& \underline{29.05}
& \textbf{29.05}
& \textbf{26.46}$_{\text{+4.92}}$ \\

\bottomrule
\end{tabular}
\vspace{-8pt}
\end{table}

\textbf{Main Results.}
\label{sec:targeted_results}Table~\ref{tab:targeted_reasoning} shows that \shortname achieves the best average performance on both tasks and ranks first in five of six settings, exceeding Random by $4.92$ points overall. \shortname improves over heuristic-based selection,
exceeding LESS and RDS+ in overall average by 2.55\% and
1.95\%, respectively.
The advantage is more pronounced on GSM8K, where \shortname
outperforms the strongest heuristic baseline, LESS, in task-average
performance by 2.62\%. \shortname also consistently outperforms all learnable-weighting baselines. Compared with SEAL$\bm\phi$ and SBO$\bm\phi$, which use the same 50K training subset and full selection pool, \shortname improves the overall average by $4.05$ and $3.99$ points, respectively. This controlled comparison shows that the performance gains come from the new loss rather than merely using a shared meta-network.

Regarding training efficiency, SEAL$_w$ and SBO$_w$ optimize sample-specific weights and thus must be trained on the full 200K pool, whereas \shortname trains on only 50K examples and generalizes to the full pool. As a result, SBO$_w$ and SEAL$_w$ require 59.8 and 39.6 hours, respectively, corresponding to 3.46$\times$ and 1.95$\times$ more training time than the 13.4 hours required by \shortname. SBO$_w$ also incurs higher memory usage due to jointly training multiple models, requiring 188.2 GB compared with 156.8 GB for \shortname, an increase of 31.4,GB (20.03\%).

\vspace{-0.5em}
\section{Conclusion} 
\vspace{-0.5em}
We introduced \shortname, a transferable data-selection framework that learns a selector from sample-level pseudo-labels. We identify weight suppression and easy-feature preference of ScaleBiO objective and address them with PVM. \shortname shows strong transfer across datasets, corpus scales, and model sizes.


\subsection*{AI use statement}

In this work, we used generative AI tools to formulate mathematical claims, assist in the writing of proofs, provide feedback on research methodology or experiments, help develop theoretical models or conceptual frameworks, provide critical ingredients for proving mathematical claims, and assist with translation.
We have not used generative AI tools to generate synthetic data sets, implement methods, support qualitative and thematic data analysis, or interpret results, and propose or refine hypotheses and clean and reformat dataset are not applicable to this work.

Additionally, we used generative AI tools to edit software code, search for information, and edit a research paper to improve readability.
We have not used generative AI tools to create or modify scientific figures or images, suggest experimental parameters, draft parts of a research paper, summarize or analyse existing literature, discover research topics or identify gaps, brainstorm, identify relevant literature, format references, suggest a structure for a research paper, or propose a title or keywords for a research paper.
Formulate questions for surveys or interviews, creation of artifacts, and transcribe recordings of research material are not applicable to this work.

We have reviewed all AI-assisted work. All AI-assisted text, mathematical results, and information-search results were manually verified and revised by the authors as needed.
We take responsibility for the final content of this work, including text, claims or artifacts produced with the aid of generative AI.

\subsection*{Reproducibility statement}

We provide detailed descriptions of the models, datasets, training configurations, and evaluation protocols in Section~\ref{sec:safety_exp} and Appendix~\ref{app:experiment_details}. All datasets and pretrained models used in this work are from prior work and are publicly available. We will also release our code upon acceptance to facilitate reproducibility.

\bibliography{iclr2026_conference}
\bibliographystyle{iclr2026_conference}

\clearpage
\appendix

\startcontents[appendices]
\section*{Appendix Contents}
\printcontents[appendices]{}{1}{%
    \setcounter{tocdepth}{2}%
}
    
\clearpage

\section{Gradient of the ScaleBiO meta-network Objective}
\label{app:sbo_gradient}

\paragraph{Notation.}
Let $\mathcal D_{\mathrm{train}}=\{x_i\}_{i=1}^{N}$, where
$N=N_{\mathrm{train}}$ is the number of training samples.
Let $\ell(\bm\theta;x_i)$ denote the token-averaged cross-entropy loss
of an LLM parameterized by $\bm\theta$ on sample $x_i$.
For the selection network $s_{\bm\phi}$, define
\[
    w_i := s_{\bm\phi}(x_i)>0,
    \qquad
    S := \sum_{j=1}^{N}w_j,
    \qquad
    p_i := \frac{w_i}{S}.
\]
The two LLMs are held fixed at $\bm\theta_U^*$ and $\bm\theta_W^*$.
Define the per-sample pseudo-labels and their weighted average as
\begin{align*}
    \Delta_i^*
    &:=
    \ell(\bm\theta_U^*;x_i)-\ell(\bm\theta_W^*;x_i),
    \\
    \bar{\Delta}^*
    &:=
    \sum_{j=1}^{N}p_j\Delta_j^*
    =
    \frac{\sum_{j=1}^{N}w_j\Delta_j^*}{S}.
\end{align*}
The pseudo-labels $\Delta_i^*$ are constant with respect to $\bm\phi$
and the data weights, whereas $\bar{\Delta}^*$ depends on the weights
through $p_j$.

\paragraph{meta-network objective.}
For the fixed LLMs, the objective in Eq.~\ref{eq:sbo_loss} becomes
\begin{align*}
    \mathcal L_{\bm\phi}^{\mathrm{SBO}}
    &=
    \sum_{j=1}^{N}
    \frac{w_j}{S}
    \big[
        \ell(\bm\theta_W^*;x_j)-\ell(\bm\theta_U^*;x_j)
    \big]
    \\
    &=
    -\sum_{j=1}^{N}p_j\Delta_j^*
    =
    -\frac{\sum_{j=1}^{N}w_j\Delta_j^*}{S}
    =
    -\bar{\Delta}^*.
\end{align*}

\paragraph{Gradient with respect to an individual weight.}
For a fixed index $i$, we differentiate with respect to $w_i$ while
holding all other weights fixed.
Since the pseudo-labels are constant, the derivatives of the numerator
and denominator are
\begin{align*}
    \frac{\partial}{\partial w_i}
    \left(\sum_{j=1}^{N}w_j\Delta_j^*\right)
    &=
    \frac{\partial}{\partial w_i}
    \left(
        w_i\Delta_i^*
        +
        \sum_{j\ne i}w_j\Delta_j^*
    \right)
    =
    \Delta_i^*,
    \\
    \frac{\partial S}{\partial w_i}
    &=
    \frac{\partial}{\partial w_i}
    \left(
        w_i+\sum_{j\ne i}w_j
    \right)
    =
    1.
\end{align*}
Applying the quotient rule and substituting these derivatives gives
\begin{align}
    \frac{\partial\mathcal L_{\bm\phi}^{\mathrm{SBO}}}{\partial w_i}
    &=
    -\frac{
        S\,\frac{\partial}{\partial w_i}
        \left(\sum_{j=1}^{N}w_j\Delta_j^*\right)
        -
        \left(\sum_{j=1}^{N}w_j\Delta_j^*\right)
        \frac{\partial S}{\partial w_i}
    }{S^2}
    \nonumber\\
    &=
    -\frac{
        S\Delta_i^*
        -
        \sum_{j=1}^{N}w_j\Delta_j^*
    }{S^2}
    \nonumber\\
    &=
    -\frac{1}{S}
    \left(
        \Delta_i^*
        -
        \frac{\sum_{j=1}^{N}w_j\Delta_j^*}{S}
    \right)
    \nonumber\\
    &=
    -\frac{\Delta_i^*-\bar{\Delta}^*}{S}.
    \label{eq:app_sbo_weight_gradient}
\end{align}

\section{Proof of Theorem~\ref{thm:weight_concentration}}
\label{app:weight_concentration_proof}

\paragraph{Notation and assumptions.}
Let $N=N_{\mathrm{train}}\geq2$ and fix the pseudo-labels
$\Delta_i^*:=\Delta_i(\bm\theta_U^*,\bm\theta_W^*)$.
We treat the raw weights $\mathbf w=(w_1,\ldots,w_N)$ as
independent variables constrained only by
$w_i\geq\delta_0\geq0$, with $w_i(0)>\delta_0$ for every $i$.
In particular, no upper bound is imposed on the weights.
Assume there is a unique index $K$ such that
$\Delta_K^*>\Delta_i^*$ for all $i\neq K$.

For any feasible $\mathbf w$ with positive total weight, define
\[
    S(\mathbf w)=\sum_{j=1}^{N}w_j,
    \qquad
    p_i(\mathbf w)=\frac{w_i}{S(\mathbf w)},
    \qquad
    \bar\Delta^*(\mathbf w)
    =\sum_{i=1}^{N}p_i(\mathbf w)\Delta_i^*.
\]
Writing the ScaleBiO objective as a function of the independent
weights gives
\[
    \mathcal L^{\mathrm{SBO}}(\mathbf w)
    =-\bar\Delta^*(\mathbf w),
    \qquad
    \frac{\partial\mathcal L^{\mathrm{SBO}}}{\partial w_i}
    =-\frac{\Delta_i^*-\bar\Delta^*(\mathbf w)}
    {S(\mathbf w)}.
\]

\paragraph{Gradient descent and continuous-time dynamics.}
To establish the theorem, we first specify the weight updates
and their continuous-time counterpart.
Let $n$ index discrete iterations and let $\eta_w>0$ be the
learning rate.
Projected gradient descent under the constraint
$w_i\geq\delta_0$ yields
\begin{equation}
    w_i^{(n+1)}
    =
    \max\left\{
        \delta_0,\,
        w_i^{(n)}
        +
        \frac{\eta_w}{S^{(n)}}
        \bigl(\Delta_i^*-\bar\Delta^{*(n)}\bigr)
    \right\},
    \label{eq:sb_weight_update}
\end{equation}
where $S^{(n)}=S(\mathbf w^{(n)})$ and
$\bar\Delta^{*(n)}=\bar\Delta^*(\mathbf w^{(n)})$.
The projection prevents an update from taking a weight below
$\delta_0$.

The corresponding continuous-time dynamics, with the learning
rate absorbed into time through $t=n\eta_w$, are described by
the projected gradient flow
\begin{equation}
    \dot w_i(t)
    =
    \begin{cases}
        \displaystyle
        \frac{\Delta_i^*-\bar\Delta^*(t)}{S(t)},
        & w_i(t)>\delta_0,\\[6pt]
        \displaystyle
        \max\left\{
            0,\,
            \frac{\Delta_i^*-\bar\Delta^*(t)}{S(t)}
        \right\},
        & w_i(t)=\delta_0,
    \end{cases}
    \label{eq:sbo_projected_flow}
\end{equation}
where $S(t)=S(\mathbf w(t))$,
$p_i(t)=p_i(\mathbf w(t))$, and
$\bar\Delta^*(t)=\bar\Delta^*(\mathbf w(t))$.
At the boundary, negative directions are suppressed while
positive directions remain admissible.
The proof below establishes the concentration result for
this continuous-time flow.

\begin{proof}
Consider an absolutely continuous solution of
Eq.~\ref{eq:sbo_projected_flow}.
All differential identities and inequalities below hold
almost everywhere, including across boundary transitions.

Define the pseudo-label range
\[
    R_\Delta
    :=\Delta_K^*-\min_i\Delta_i^*>0
\]
and the set of coordinates following the unprojected direction:
\[
    \mathcal I(t)
    :=
    \left\{
        i:
        w_i(t)>\delta_0
        \ \text{or}\
        \Delta_i^*>\bar\Delta^*(t)
    \right\}.
\]
Then
\[
    \dot w_i(t)
    =
    \begin{cases}
        (\Delta_i^*-\bar\Delta^*(t))/S(t),
        & i\in\mathcal I(t),\\
        0,
        & i\notin\mathcal I(t).
    \end{cases}
\]

\paragraph{1. Positivity and growth of the normalization.}
Since the weights are nonnegative, $\bar\Delta^*(t)$ is a
convex combination of the pseudo-labels.
Thus $\bar\Delta^*(t)\leq\Delta_K^*$ and $\dot w_K(t)\geq0$,
which implies
\begin{equation}
    S(t)\geq w_K(t)\geq w_K(0)>0.
    \label{eq:app_positive_normalization}
\end{equation}
In particular, $w_K(t)>\delta_0$, so
$K\in\mathcal I(t)$ for all $t$.

Moreover,
$|\Delta_i^*-\bar\Delta^*(t)|\leq R_\Delta$.
Hence $|\dot w_i(t)|\leq R_\Delta/w_K(0)$, excluding
finite-time blow-up.
Summing the coordinate derivatives gives
\begin{align}
    \frac{d}{dt}S(t)^2
    &=
    2S(t)\sum_i\dot w_i(t)
    \nonumber\\
    &=
    2\sum_{i\in\mathcal I(t)}
    \bigl(\Delta_i^*-\bar\Delta^*(t)\bigr)
    \leq2NR_\Delta.
\end{align}
Therefore,
\begin{equation}
    S(t)^2\leq S(0)^2+2NR_\Delta t.
    \label{eq:app_normalization_growth}
\end{equation}
By comparison with the reciprocals of this upper bound and
its square root, respectively,
\begin{equation}
    \int_0^\infty\frac{dt}{S(t)^2}=\infty,
    \qquad
    \int_0^\infty\frac{dt}{S(t)}=\infty.
    \label{eq:app_divergent_integrals}
\end{equation}

\paragraph{2. The weighted mean converges to the largest
pseudo-label.}
Differentiating the weighted mean yields
\begin{align}
    \frac{d\bar\Delta^*(t)}{dt}
    &=
    \frac{1}{S(t)}
    \sum_i
    \bigl(\Delta_i^*-\bar\Delta^*(t)\bigr)\dot w_i(t)
    \nonumber\\
    &=
    \frac{1}{S(t)^2}
    \sum_{i\in\mathcal I(t)}
    \bigl(\Delta_i^*-\bar\Delta^*(t)\bigr)^2
    \geq0.
    \label{eq:app_weighted_mean_monotone}
\end{align}
Thus $\bar\Delta^*(t)$ is nondecreasing and bounded above
by $\Delta_K^*$.

Let $V(t)=\Delta_K^*-\bar\Delta^*(t)\geq0$.
Since $K\in\mathcal I(t)$,
\begin{equation}
    \dot V(t)\leq-\frac{V(t)^2}{S(t)^2}.
    \label{eq:app_gap_differential_inequality}
\end{equation}
If $V$ reaches zero, it remains zero by the monotonicity of
$\bar\Delta^*$.
Otherwise, integrating
$\frac{d}{dt}(1/V(t))\geq1/S(t)^2$ gives
\[
    \frac{1}{V(t)}
    \geq
    \frac{1}{V(0)}
    +
    \int_0^t\frac{ds}{S(s)^2}.
\]
The first integral in Eq.~\ref{eq:app_divergent_integrals}
diverges, so $V(t)\to0$.
Consequently,
\begin{equation}
    \bar\Delta^*(t)\longrightarrow\Delta_K^*.
    \label{eq:app_weighted_mean_limit}
\end{equation}

\paragraph{3. The normalized weights concentrate on the
unique maximizer.}
For $i\neq K$, define
\[
    d_i:=\Delta_K^*-\Delta_i^*>0,
    \qquad
    d_{\min}:=\min_{i\neq K}d_i>0.
\]
Then
\[
    V(t)
    =
    \sum_{i\neq K}p_i(t)d_i
    \geq
    d_{\min}\bigl(1-p_K(t)\bigr).
\]
Since $V(t)\to0$, we have $p_K(t)\to1$.
For every $i\neq K$,
$0\leq p_i(t)\leq1-p_K(t)$, and hence $p_i(t)\to0$.

\paragraph{4. Every nonmaximal weight reaches the floor
in finite time.}
Fix $i\neq K$.
By Eq.~\ref{eq:app_weighted_mean_limit}, there exists
a finite $T_i$ such that
\[
    \bar\Delta^*(t)-\Delta_i^*
    \geq\frac{d_i}{2},
    \qquad t\geq T_i.
\]
Whenever $w_i(t)>\delta_0$ after $T_i$,
\begin{equation}
    \dot w_i(t)\leq-\frac{d_i}{2S(t)}.
    \label{eq:app_nonmaximal_weight_decrease}
\end{equation}
If $w_i$ never reached the floor after $T_i$, integration
would give
\[
    w_i(t)
    \leq
    w_i(T_i)
    -
    \frac{d_i}{2}\int_{T_i}^{t}\frac{ds}{S(s)}.
\]
The second integral in Eq.~\ref{eq:app_divergent_integrals}
diverges, so the right-hand side would eventually fall
below $\delta_0$, a contradiction.
Thus $w_i$ reaches the floor in finite time.

Once at the floor at or after $T_i$, its unprojected direction
remains negative, so the projection keeps it there.
Since there are finitely many nonmaximal coordinates,
there exists a finite $T_0$ such that
\begin{equation}
    w_i(t)=\delta_0
    \qquad
    \text{for all }i\neq K\text{ and }t\geq T_0.
    \label{eq:app_floor_hitting}
\end{equation}

For $\delta_0=0$, only $w_K(t)$ remains positive after $T_0$.
Therefore, $p_K(t)=1$ and $p_i(t)=0$ for every $i\neq K$
and $t\geq T_0$, proving finite-time convergence.

\paragraph{5. Convergence rate for a positive floor.}
Suppose $\delta_0>0$ and define
\[
    H:=(N-1)\delta_0>0,
    \qquad
    C:=\delta_0\sum_{i\neq K}d_i>0.
\]
For $t\geq T_0$, Eq.~\ref{eq:app_floor_hitting} gives
$S(t)=w_K(t)+H$ and
\[
    \Delta_K^*-\bar\Delta^*(t)
    =
    \frac{\delta_0\sum_{i\neq K}d_i}{S(t)}
    =
    \frac{C}{S(t)}.
\]
Only $w_K$ changes after $T_0$, so
\[
    \dot S(t)=\dot w_K(t)=\frac{C}{S(t)^2}.
\]
Multiplying by $3S(t)^2$ and integrating yields
\begin{equation}
    S(t)^3=S(T_0)^3+3C(t-T_0).
    \label{eq:app_normalization_exact_rate}
\end{equation}
Consequently,
\begin{align}
    1-p_K(t)
    &=
    \frac{H}{S(t)}
    \nonumber\\
    &=
    \frac{H}
    {\bigl[S(T_0)^3+3C(t-T_0)\bigr]^{1/3}}
    =
    O(t^{-1/3}).
\end{align}
This completes the proof.
\end{proof}

\section{Proof of Theorem~\ref{thm:feature_learning_comparison}}
\label{app:feature_learning_proof}
\paragraph{Notation and comparison condition.}
Denote the empirical covariance between random variables $u$ and $v$ as 
$\widehat{\operatorname{Cov}}_N(u,v)
:=N^{-1}\sum_{i=1}^{N}(u_i-\bar u)(v_i-\bar v).$
Further, denote each training data point as $\bm x_i=(1, e_i, h_i)^\top$. The features $e_i$ and $h_i$ are standardized to zero mean and unit standard deviation. 
The pseudo-labels
$\Delta_i=\Delta_i(\bm\theta_U^*,\bm\theta_W^*)$ are fixed.
For compactness, write $x_i^{(e)}=e_i$ and $x_i^{(h)}=h_i$, we have
\begin{equation}
\sum_i x_i^{(k)} = 0, \quad \frac{1}{N}\sum_i \left(x_i^{(k)} \right)^2 = 1, \quad k \in \{e, h\}.
\end{equation}
We call $e$ the easy feature and $h$ the hard feature because $e$ has stronger covariance with the target label than $h$, 
\begin{equation}
c_e=\widehat{\operatorname{Cov}}_N(\Delta,e),
\quad
c_h=\widehat{\operatorname{Cov}}_N(\Delta,h),
\quad |c_e|>|c_h|.
\end{equation}

As each feature is centered, for any vector $v\in\mathbb R^N$,
\begin{align}
\widehat{\operatorname{Cov}}_N(v,x^{(k)})
&=
\frac1N\sum_i(v_i-\bar v)x_i^{(k)}
\nonumber\\
&=
\frac1N\sum_i v_i x_i^{(k)}
-
\frac{\bar v}{N}\sum_i x_i^{(k)}
\nonumber\\
&=
\frac1N\sum_i v_i x_i^{(k)},
\qquad
\bar v=\frac1N\sum_i v_i.
\label{eq:app_centered_covariance}
\end{align}
Taking $v=\Delta^*$ gives
\begin{equation}
c_k
= \widehat{\operatorname{Cov}}_N(\Delta,x^{(k)}) =
\frac1N\sum_i\Delta_ix_i^{(k)},
\qquad
\sum_i\Delta_ix_i^{(k)}=Nc_k,
\quad k\in\{e,h\}.
\label{eq:app_target_covariance_identity}
\end{equation}

We have a meta-network parameterized by $\bm{\phi}_t=(\beta_{0,t},\beta_{e,t},\beta_{h,t})$. 
For each training data point $\bm x_i$ and training step $t$, the output $w_{i,t}$ of the meta-network is computed in the following way. 
\begin{equation}
a_{i,t}=\bm{\beta}_t^\top \bm{x}_i = \beta_{0,t}+\beta_{e,t} e_i+\beta_{h,t} h_i,
\qquad
w_{i,t}=s_{\bm{\phi}_t}(x_i)=a_{i,t}.
\end{equation}

Since the weights of data points are expected as non-negative, we assume that the initialization of parameters $\bm{\phi}_0$ satisfies $\beta_{e,0}=\beta_{h,0}=0$ and $\beta_{0,0}>0$ ensuring all $w_{i,0}>0$. Then there exists a training step $T>0$, such that $\forall t < T$, $w_{i,t}>0$ for all $i$. We only consider the training steps $t<T$.

Recall the two objectives:
\begin{align}
\mathcal L^{\mathrm{SBO}}(\bm{\phi}_t^{\mathrm{SBO}})
=
-\frac{\sum_iw_{i,t}^{\mathrm{SBO}}\Delta_i}{S_t},
\qquad
\mathcal L^{\mathrm{PVM}}(\bm{\phi}_t^{\mathrm{PVM}})
=
\frac1N\sum_i(w_{i,t}^{\mathrm{PVM}}-\Delta_i)^2,
\end{align}
where $S_t=\sum_iw_{i,t}^{\mathrm{SBO}}>0$ and
$\bar\Delta_t=\sum_iw_{i,t}^{\mathrm{SBO}}\Delta_i/S_t$.
We write
$G_k^q(\bm{\phi}_t^q)
=\left.\partial\mathcal L^q(\bm{\phi})/\partial\beta_k
\right|_{\bm{\phi}=\bm{\phi}_t^q}$
for $q\in\{\mathrm{SBO},\mathrm{PVM}\}$ and $k\in\{e,h\}$.

\begin{proof}
We first compare the feature gradients of ScaleBiO.
\paragraph{1. ScaleBiO shows a non-decreasing gap between easy and hard features.}
We first derive the ScaleBiO gradient with respect to parameters.
Since the pseudo-labels are fixed, the quotient rule gives
\begin{align}
G_k^{\mathrm{SBO}}(\bm{\phi}_t^{\mathrm{SBO}})
&=
-\frac1{S_t}
\sum_i\Delta_i
\frac{\partial w_{i,t}^{\mathrm{SBO}}}
     {\partial\beta_{k,t}^{\mathrm{SBO}}}
+
\frac{\sum_jw_{j,t}^{\mathrm{SBO}}\Delta_j^*}{S_t^2}
\sum_i
\frac{\partial w_{i,t}^{\mathrm{SBO}}}
     {\partial\beta_{k,t}^{\mathrm{SBO}}}
\nonumber\\
&=
-\frac1{S_t}
\sum_i
(\Delta_i-\bar\Delta_t)
\frac{\partial w_{i,t}^{\mathrm{SBO}}}
     {\partial\beta_{k,t}^{\mathrm{SBO}}}.
\label{eq:app_sbo_quotient_rule}
\end{align}
The meta-network definition implies
\begin{align}
\frac{\partial w_{i,t}^{\mathrm{SBO}}}
     {\partial\beta_{k,t}^{\mathrm{SBO}}}
=
x_i^{(k)}.
\end{align}
Substituting this into Eq.~\eqref{eq:app_sbo_quotient_rule} yields
\begin{align}
G_k^{\mathrm{SBO}}(\bm{\phi}_t^{\mathrm{SBO}})
&=
-\frac{1}{S_t}
\sum_i(\Delta_i-\bar\Delta_t)x_i^{(k)}
\nonumber\\
&=
-\frac{1}{S_t}
\left[
\sum_i\Delta_ix_i^{(k)}
-
\bar\Delta_t\sum_i x_i^{(k)}
\right]
\nonumber\\
&=
-\frac{1}{S_t}
\sum_i\Delta_ix_i^{(k)}
\nonumber\\
&=
-\frac{N}{S_t}c_k.
\label{eq:app_sbo_covariance_gradient}
\end{align}
The third equality uses $\sum_i x_i^{(k)}=0$.
The final equality uses
$\sum_i\Delta_ix_i^{(k)}=Nc_k$
from Eq.~\ref{eq:app_target_covariance_identity}.
Since the positive factor $N/S_t$ is shared by both features,
\begin{equation}
(G_e^{\mathrm{SBO}}(\bm{\phi}_t^{\mathrm{SBO}}))^2
-
(G_h^{\mathrm{SBO}}(\bm{\phi}_t^{\mathrm{SBO}}))^2
=
\left(\frac{N}{S_t}\right)^2(c_e^2-c_h^2)
>0.
\label{eq:app_sbo_gradient_ordering}
\end{equation}

Next, we show that \(S_t\) is non-increasing,
\begin{align}
S_t
&=
\sum_i w_{i,t}^{\mathrm{SBO}}
\nonumber\\
&=
N \beta_{0,t}^{\mathrm{SBO}}
+\beta_{e,t}^{\mathrm{SBO}}\sum_i e_i
+\beta_{h,t}^{\mathrm{SBO}}\sum_i h_i
\nonumber\\
&=N \beta_{0,t}^{\mathrm{SBO}}.
\label{eq:app_sbo_weight_sum}
\end{align}

Define
\begin{equation}
z_t
=
c_e\beta_{e,t}^{\mathrm{SBO}}
+c_h\beta_{h,t}^{\mathrm{SBO}}.
\end{equation}
Using Eq.~\eqref{eq:app_sbo_covariance_gradient},
the gradient descent updates give
\begin{equation}
\beta_{k,t+1}^{\mathrm{SBO}}
=
\beta_{k,t}^{\mathrm{SBO}}
+\eta\frac{ N}{S_t}c_k,
\qquad k\in\{e,h\}.
\end{equation}
Consequently,
\begin{equation}
z_{t+1}
=
z_t+\eta\frac{ N}{S_t}(c_e^2+c_h^2).
\label{eq:app_sbo_correlation_update}
\end{equation}
Since $z_0=0$ and $S_t>0$, it follows that $z_t\ge0$.

To determine the bias update, write
$\Delta_{\mathrm{avg}}=N^{-1}\sum_i\Delta_i$.
Eq.~\ref{eq:app_sbo_weight_sum} and
Eq.~\ref{eq:app_target_covariance_identity} imply
\begin{align}
\sum_i w_{i,t}^{\mathrm{SBO}}\Delta_i
&=
\beta_{0,t}^{\mathrm{SBO}}
\sum_i\Delta_i
+ N z_t
\nonumber\\
&=
S_t\Delta_{\mathrm{avg}}+ N z_t.
\end{align}
Therefore,
\begin{align}
\mathcal L^{\mathrm{SBO}}(\bm{\phi}_t^{\mathrm{SBO}})
=
-\Delta_{\mathrm{avg}}
-\frac{ N z_t}{S_t}.
 \end{align}
Since $z_t$ does not depend on the bias coordinate and
$\partial S_t/\partial b_t^{\mathrm{SBO}}= N$,
the bias gradient is
\begin{equation}
\frac{d \mathcal{L}^{\mathrm{SBO}}}{d\beta_{0,t}}
=
\frac{N^2}{S_t^2}z_t
\ge0.
\label{eq:app_sbo_bias_gradient}
\end{equation}
Thus the bias update satisfies
 \begin{align}
\beta_{0,t+1}^{\mathrm{SBO}}
=
\beta_{0,t}^{\mathrm{SBO}}
-\eta\frac{N^2}{S_t^2}z_t
\le \beta_{0,t}^{\mathrm{SBO}}.
 \end{align}
Combining this with
Eq.~\eqref{eq:app_sbo_weight_sum} yields
\begin{equation}
S_{t+1}
=
S_t-\eta\frac{3N^3}{S_t^2}z_t
\le S_t.
\label{eq:app_sbo_weight_sum_monotonicity}
\end{equation}

Finally, Eq.~\eqref{eq:app_sbo_gradient_ordering} gives
\begin{align}
D_t^{\mathrm{SBO}}
&=
\left(\frac{ N}{S_t}\right)^2
(c_e^2-c_h^2)
\nonumber\\
&=
\left(\frac{ N}{S_{t-1}}\right)^2
(c_e^2-c_h^2)
\nonumber\\
&\ge D_{t-1}^{\mathrm{SBO}}.
\label{eq:app_sbo_gradient_gap_lower_bound}
\end{align}

Thus, ScaleBiO continues to learn easy features than hard features, and their gradient gap does not decrease. Next we show the exponentially decreasing of gradient gap with PVM.

\paragraph{2. PVM exhibits a exponentially decreasing gradient gap.}
We now analyze the PVM objective under the same fixed affine surrogate.
Define the residual
 \begin{align}
r_{i,t}
=
w_{i,t}^{\mathrm{PVM}}-\Delta_i.
 \end{align}
Since the pseudo-labels are fixed, the feature gradients are
\begin{align}
G_k^{\mathrm{PVM}}(\bm{\phi}_t^{\mathrm{PVM}})
&=
\frac2N\sum_i
r_{i,t}
\frac{\partial w_{i,t}^{\mathrm{PVM}}}
     {\partial\beta_{k,t}^{\mathrm{PVM}}}
\nonumber\\
&=
\frac{2}{N}\sum_i r_{i,t}x_i^{(k)},
\qquad k\in\{e,h\}.
\label{eq:app_pvm_feature_gradient}
\end{align}

To include the bias update, write
The full gradient is therefore
\begin{equation}
x_i=(1,e_i,h_i)^\top,
\qquad
g_t
=
\nabla_{\bm{\phi}}\mathcal L^{\mathrm{PVM}}
(\bm{\phi}_t^{\mathrm{PVM}}).
\end{equation}
The full gradient is therefore
\begin{equation}
g_t=\frac{2}{N}\sum_i r_{i,t}x_i.
\label{eq:app_pvm_full_gradient}
\end{equation}
Each network output is,
\begin{align}
w_{i,t}^{\mathrm{PVM}}
=x_i^\top\bm{\phi}_t^{\mathrm{PVM}}.
\end{align}
Combining this with the gradient descent update
$\bm{\phi}_{t+1}^{\mathrm{PVM}}
=\bm{\phi}_t^{\mathrm{PVM}}-\eta g_t$ gives
\begin{align}
r_{i,t+1}
&=
r_{i,t}
+ x_i^\top
\left(\bm{\phi}_{t+1}^{\mathrm{PVM}}
-\bm{\bm{\phi}}_t^{\mathrm{PVM}}\right)
\nonumber\\
&=
r_{i,t}-\eta x_i^\top g_t.
\label{eq:app_pvm_residual_update}
\end{align}
Substituting Eq.~\eqref{eq:app_pvm_residual_update}
into Eq.~\eqref{eq:app_pvm_full_gradient} yields
\begin{align}
g_{t+1}
&=
\frac{2}{N}
\sum_i
\left(r_{i,t}-\eta x_i^\top g_t\right)x_i
\nonumber\\
&=
g_t
-
2\eta
\left(\frac1N\sum_i x_ix_i^\top\right)g_t
\nonumber\\
&=
(I-2\eta Q)g_t,
\label{eq:app_pvm_gradient_update}
\end{align}
where
 \begin{align}
Q=\frac1N\sum_i x_ix_i^\top.
 \end{align}
Because the features are centered and have unit variance,
\begin{equation}
Q=
\begin{pmatrix}
1&0&0\\
0&1&\rho\\
0&\rho&1
\end{pmatrix},
\qquad
\rho=\frac1N\sum_i e_i h_i.
\label{eq:app_feature_second_moment}
\end{equation}

For brevity, write
$G_{k,t}^{\mathrm{PVM}}
=G_k^{\mathrm{PVM}}(\bm{\phi}_t^{\mathrm{PVM}})$.
Equations~\eqref{eq:app_pvm_gradient_update}
and~\eqref{eq:app_feature_second_moment} imply
\begin{align}
G_{e,t+1}^{\mathrm{PVM}}
&=
(1-2\eta)G_{e,t}^{\mathrm{PVM}}
-2\eta\rho G_{h,t}^{\mathrm{PVM}},
\nonumber\\
G_{h,t+1}^{\mathrm{PVM}}
&=
(1-2\eta)G_{h,t}^{\mathrm{PVM}}
-2\eta\rho G_{e,t}^{\mathrm{PVM}}.
\label{eq:app_pvm_feature_gradient_update}
\end{align}
Consequently, the squared-gradient gap satisfies
\begin{align}
D_{t+1}^{\mathrm{PVM}}
&=
\left[
(1-2\eta)G_{e,t}^{\mathrm{PVM}}
-2\eta\rho G_{h,t}^{\mathrm{PVM}}
\right]^2
\nonumber\\
&\quad-
\left[
(1-2\eta)G_{h,t}^{\mathrm{PVM}}
-2\eta\rho G_{e,t}^{\mathrm{PVM}}
\right]^2
\nonumber\\
&=
\left[(1-2\eta)^2-4\eta^2\rho^2\right]
\left[
(G_{e,t}^{\mathrm{PVM}})^2
-(G_{h,t}^{\mathrm{PVM}})^2
\right]
\nonumber\\
&=
\left[(1-2\eta)^2-4\eta^2\rho^2\right]
D_t^{\mathrm{PVM}}.
\label{eq:app_pvm_gradient_gap_recursion}
\end{align}
The cross terms cancel in the third equality.

Choose the step size such that:
 \begin{align}
0<\eta<\frac1{2+2|\rho|}.
 \end{align}
Then
 \begin{align}
0<
(1-2\eta)^2-4\eta^2\rho^2
=
[1-2\eta(1+\rho)]
[1-2\eta(1-\rho)]
<1.
 \end{align}


Define
\begin{equation}
\lambda
=
1-\left[(1-2\eta)^2-4\eta^2\rho^2\right]
=
4\eta-(1-\rho^2)4\eta^2, \qquad \lambda\in(0,1)
\label{eq:app_pvm_contraction_parameter}
\end{equation}
Equation~\eqref{eq:app_pvm_gradient_gap_recursion}
therefore becomes
 \begin{align}
D_{t+1}^{\mathrm{PVM}}
=
(1-\lambda)D_t^{\mathrm{PVM}}.
 \end{align}
Iterating this identity gives
\begin{equation}
D_t^{\mathrm{PVM}}
=
(1-\lambda)^tD_0^{\mathrm{PVM}}.
\label{eq:app_pvm_gradient_gap_decay}
\end{equation}

Thus PVM exponentially contracts the absolute squared-gradient
gap, while Eq.~\eqref{eq:app_sbo_gradient_gap_lower_bound}
shows that the ScaleBiO gap remains no lower than its initial
value throughout the stated comparison interval.
\end{proof}

\section{Experiment Details}
\label{app:experiment_details}

\subsection{Datasets}
\label{app:dataset_details}

For LLM safety, we use Alpaca and Dolly as training datasets and evaluate on DirectHarm4, HarmBench, and HEx-PHI:
\begin{itemize}
    \item Alpaca~\citep{alpaca,peng2023instruction} contains
    52,002 instruction--response pairs covering diverse
    instruction-following tasks.

    \item Dolly~\citep{DatabricksBlog2023DollyV2} contains
    15,011 human-written instruction--response pairs covering
    tasks such as question answering, summarization, and
    creative writing.

    \item DirectHarm4 (DH4)~\citep{lyu2024keeping} contains
    400 harmful instructions, with 100 examples in each of
    four categories: malware, drugs, phishing, and disinformation.

    \item HarmBench (HB)~\citep{mazeika2024harmbench} evaluates
    harmful behavior generation.
    Its text-only portion contains 400 behaviors across
    standard, contextual, and copyright-related categories.

    \item HEx-PHI (HEx)~\citep{qi2024fine} evaluates responses
    to instructions spanning prohibited-use categories.
    Its original release contains 330 instructions, with
    30 examples in each of 11 categories: illegal activity,
    child abuse content, hate/harassment/violence, malware,
    physical harm, economic harm, fraud/deception, adult content,
    political campaigning, privacy violation, and tailored
    financial advice.
\end{itemize}

For targeted instruction tuning, we use Tulu V2 as the candidate instruction corpus and evaluate data selection for GSM8K and CodeX:
\begin{itemize}
    \item Tulu V2~\citep{ivison2025large} is a multi-source
    instruction-tuning corpus; we use the approximately
    200K-example version preprocessed by
    \citet{nayak2026critical}, which excludes examples with
    no response tokens within the first 2,048 tokens.
    We train the meta-network on a random 50K subset and apply
    it to the full candidate pool.

    \item GSM8K~\citep{cobbe2021training} contains
    grade-school math problems with step-by-step solutions,
    comprising 7,473 training examples and 1,319 test examples.
    We hold out 1,000 examples from the training split as
    a shared validation set for all methods.

    \item CodeX~\citep{chen2021evaluating} contains
    164 Python function-completion problems with unit tests
    for evaluating functional correctness.
    Following \citet{ivison2025large}, we use 16 examples
    as the query set and the remaining 148 as the test set.
\end{itemize}

\subsection{Implementation Details}
\label{app:instruction_exp_details}

\textbf{meta-network learning.}
We train the two LLMs, $\bm\theta_U$ and $\bm\theta_W$, using
LISA~\citep{pan2024lisa}, with learning rates in
$\{5\times10^{-5},10^{-4}\}$ for up to three epochs.
We tune the training--validation trade-off coefficient $\alpha$
over $\{0.01,0.1,1.0\}$.
The meta-network is a three-layer MLP with a sigmoid output,
optimized using AdamW with a learning rate of $10^{-3}$.

\textbf{Downstream fine-tuning for LLM safety.}
We apply LoRA to the query and key projections, $W_q$ and $W_k$,
in every Transformer layer, with rank $r=8$ and
$\alpha_{\mathrm{LoRA}}=8$.
We train for three epochs with a batch size of 32,
a learning rate of $10^{-4}$, and a cosine learning rate schedule
with a warmup ratio of 0.1.

\textbf{Downstream fine-tuning for targeted instruction tuning.}
We fully finetune Llama-2-7B on the selected examples for two
epochs using AdamW with a learning rate of $2\times10^{-5}$
and a linear learning rate schedule.
The maximum sequence length is 2,048 tokens.

\subsection{Input Features to the Meta-Network}
\label{app:selector_features}

Inspired by \citet{du2026difficulty}, we construct sample-level
features in both representation and gradient spaces.
The representation embedding is the final-layer hidden state
of the last token, and the gradient embedding is derived from
the per-sample loss gradient.
In each space, we compute the distances to each of the three
nearest validation examples and to the mean validation embedding.
We concatenate these eight distances with the sample's loss
and gradient norm to obtain a 10-dimensional feature vector.

For LLM safety, we append a binary feature indicating whether
the sample contains numbers or predefined keywords such as
\texttt{list}, \texttt{convert}, and \texttt{calculate}.
This feature is motivated by findings that some benign
list-format and mathematical examples can degrade safety
alignment after fine-tuning~\citep{chen2024your}.

\subsection{Evaluation Details}
\label{app:evaluation_details}

\textbf{LLM safety.}
For DH4, HB, and HEx, we generate model responses using greedy
decoding (temperature 0).
We classify each response as safe or unsafe using Llama Guard,
also with greedy decoding.
We report the attack success rate (ASR), defined as the percentage
of evaluated responses classified as unsafe; lower is better.

\textbf{GSM8K.}
Following \citet{ivison2025large}, we use the eight
chain-of-thought demonstrations as individual query examples
for data selection, without prepending other demonstrations
to each query.
At evaluation time, we include all eight demonstrations in
each test prompt and report exact-match accuracy of the
extracted final numerical answers on the 1,319 test problems.

\textbf{CodeX.}
We sample code completions at a temperature of 0.8 and report
pass@10 on the 148 held-out problems.
This metric estimates the probability that at least one of
ten sampled completions passes all unit tests for a problem.


\subsection{KL Loss in Ablation Study}
\label{app:KL_loss}
The KL variant matches the normalized outputs of the meta-network $\mathbf p$ to a
distribution $\mathbf q$ reweighted by the loss-gap pseudo-labels:
$$
    p_i=\frac{s_{\bm\phi}(x_i)}{\sum_j s_{\bm\phi}(x_j)},
    \qquad
    q_i=\frac{p_i\exp(\Delta_i^*)}
    {\sum_j p_j\exp(\Delta_j^*)},
    \qquad
    \mathcal L_{\bm\phi}^{\mathrm{KL}}
    =\operatorname{KL}(\mathbf q\,\|\,\mathbf p).
$$

\section{Toy Experiment in Sec.\ref{sec:selector_objective}}
\label{app:toy_exp_details}

We construct disjoint seen and unseen datasets, each containing
2,575 safe examples from GSM8K~\citep{cobbe2021training} and
1,125 unsafe examples from BeaverTails~\citep{ji2023beavertails}.
The unsafe examples pair harmful prompts with harmful responses.
For validation, we use 100 safety-related examples
from \citet{he2024your}, each pairing a harmful prompt with
a refusal response.

We train the meta-network on the seen set for 30 epochs using AdamW
with a batch size of 256, an initial learning rate of
$5\times10^{-4}$, and a cosine learning rate schedule
that decays to zero.

For each split separately, we rank examples by predicted weight
and classify the $k=1{,}125$ lowest-weight examples as unsafe.
Let $\mathcal B_k$ denote this predicted set and $\mathcal U$
the ground-truth unsafe set.
We define unsafe-identification accuracy as
\begin{equation}
    \mathrm{Acc}
    =
    \frac{|\mathcal B_k\cap\mathcal U|}{k}\times100\%.
\end{equation}
Since $|\mathcal U|=k$, this metric equals both precision
and recall for unsafe-example identification.

\textbf{Shortcut learning.}
To test shortcut reliance, we append three synthetic features
to each example before standardization.
Let $c=+1$ for GSM8K examples and $c=-1$ for unsafe examples,
and define $\mathbf v=(1,-0.5,0.2)^\top$.
We generate the shortcut features as
\begin{equation}
    \mathbf z_{\mathrm{seen}}
    =ac\mathbf v+\boldsymbol\epsilon,
    \qquad
    \mathbf z_{\mathrm{unseen}}
    =-ac\mathbf v+\boldsymbol\epsilon',
\end{equation}
where $a$ controls shortcut strength and
$\boldsymbol\epsilon,\boldsymbol\epsilon'
\sim\mathcal N(\mathbf 0,\sigma^2\mathbf I_3)$,
with noise standard deviation $\sigma=0.3$.
We use $a=0.5$ and $a=0.3$ for the strong- and weak-shortcut
settings, respectively.
Noise is sampled independently across examples and coordinates.
This construction reverses the class-dependent shortcut means
on unseen data.

For each split, we summarize the associations between the
shortcut features and class labels before standardization.
For feature $j\in\{1,2,3\}$, let
\begin{equation}
    \widehat C_j
    =
    \frac{1}{n-1}
    \sum_{i=1}^{n}
    (z_{ij}-\bar z_j)(c_i-\bar c)
\end{equation}
denote its empirical covariance with the label, where $n$ is
the split size and $\bar z_j$ and $\bar c$ are the sample means.
We report the mean absolute covariance and mean absolute
Pearson correlation:
\begin{equation}
    \mathrm{Cov}
    =
    \frac{1}{3}\sum_{j=1}^{3}|\widehat C_j|,
    \qquad
    \rho
    =
    \frac{1}{3}\sum_{j=1}^{3}
    \left|
        \frac{\widehat C_j}
        {\widehat\sigma_j\widehat\sigma_c}
    \right|,
\end{equation}
where $\widehat\sigma_j$ and $\widehat\sigma_c$ are the
sample standard deviations of feature $j$ and the label,
respectively.

We report seen accuracy $\mathrm{Acc}_{\mathrm S}$,
unseen accuracy $\mathrm{Acc}_{\mathrm U}$, and their gap
$G=\mathrm{Acc}_{\mathrm S}-\mathrm{Acc}_{\mathrm U}$
in percentage points.
At comparable seen accuracy, a smaller gap indicates
better generalization under shortcut reversal.

\section{Small-to-Large Model Generalization}
\label{app:small2big}

Table~\ref{tab:safety_transfer} provides the detailed results of the
small-to-large model transfer experiment summarized in
Sec.~\ref{sec:safety_results}. Small2Big achieves a
$5.65$--$6.60\times$ training speedup while retaining $77.53\%$ and
$83.67\%$ of Big2Big's average ASR in the \textit{Training} and
\textit{Generalization} settings, respectively. These results show that a meta-network trained with \shortname using a small LLM
can generalize to data selection for a larger model. 
performance.

\begin{table*}[t]
\centering
\small
\setlength{\tabcolsep}{3.5pt}
\caption{
Small-to-large model transfer with \shortname.
Small2Big uses Qwen2.5-0.5B-Instruct for meta-network training and
Qwen2.5-7B-Instruct for fine-tuning on the selected data; Big2Big uses
Qwen2.5-7B-Instruct for both.
We report ASR (\%) and training speedup $S_{\mathrm{rel}}$ relative to
Big2Big within each dataset and setting.
Avg. averages the six ASR scores, and Retention (\%) is the ratio of
Small2Big's average ASR to Big2Big's within the same setting.
Higher is better for all metrics; better ASR and speedup results are
\textbf{bold}.
}
\label{tab:safety_transfer}

\begin{tabular}{@{}llcccccccccc@{}}
\toprule
\multirow{2}{*}{Setting}
& \multirow{2}{*}{Method}
& \multicolumn{4}{c}{Alpaca}
& \multicolumn{4}{c}{Dolly}
& \multicolumn{2}{c}{Overall ASR} \\
\cmidrule(lr){3-6}
\cmidrule(lr){7-10}
\cmidrule(lr){11-12}
& & DH4 & HB & HEx & $S_{\mathrm{rel}}$
& DH4 & HB & HEx & $S_{\mathrm{rel}}$
& Avg. & Retention \\
\midrule

\multirow{2}{*}{Training}
& Big2Big
& \textbf{44.50}
& \textbf{23.50}
& \textbf{24.83}
& 1.00$\times$
& \textbf{86.50}
& \textbf{87.00}
& \textbf{88.62}
& 1.00$\times$
& \textbf{59.16}
& -- \\

& Small2Big
& 35.50
& 13.00
& 13.79
& \textbf{6.60}$\times$
& 74.00
& 72.00
& 66.90
& \textbf{5.65}$\times$
& 45.87
& 77.53 \\

\midrule

\multirow{2}{*}{Generalization}
& Big2Big
& \textbf{38.75}
& \textbf{16.00}
& \textbf{18.28}
& 1.00$\times$
& \textbf{84.50}
& \textbf{83.50}
& \textbf{80.00}
& 1.00$\times$
& \textbf{53.51}
& -- \\

& Small2Big
& 32.25
& 13.00
& 14.14
& \textbf{6.20}$\times$
& 77.50
& 70.00
& 61.72
& \textbf{5.65}$\times$
& 44.77
& 83.67 \\

\bottomrule
\end{tabular}
\end{table*}

\section{Conclusion} 

In this paper, we introduced \longname (\shortname),  transferable data-selection framework that utilize sample-level pseudo-labels to learn a data selection network generalizable to unseen data.
Our theoretical and empirical analyses identify competitive
weight suppression and persistent easy-feature preference
as limitations of the ScaleBiO meta-objective.
To address these issues, we propose a new training objective PVM.
Experiments demonstrate cross-dataset, subset-to-full-corpus,
and small-to-large model transfer.
Specifically, in terms of generalization, \shortname outperforms the strongest learnable-weighting baselines by $22.75$ and $3.99$ points on LLM safety and targeted instruction tuning, respectively.











\end{document}